\documentclass{article}

\usepackage{PRIMEarxiv}

\usepackage[utf8]{inputenc}
\usepackage[T1]{fontenc}
\usepackage{hyperref}
\usepackage{url}
\usepackage{booktabs}
\usepackage{amsfonts}
\usepackage{amsmath,amssymb,amsthm}
\usepackage{nicefrac}
\usepackage{microtype}
\usepackage{fancyhdr}
\usepackage{graphicx}
\usepackage{tabularx}
\usepackage{xcolor}
\usepackage{multirow}
\usepackage{enumitem}
\usepackage{natbib}
\usepackage{booktabs}
\graphicspath{{media/}}

\newcommand{\method}{\textsc{NSR}}

\newtheorem{proposition}{Proposition}

\title{Task-Relevant Null-Space Residuals for Non-Injective Neural Mappings}

\author{
  \textbf{
    Bizu Feng\textsuperscript{1,2,3},
    Zhimu Yang\textsuperscript{4},
    Shuming Wang\textsuperscript{2,3},
    Yuan Cheng\textsuperscript{1,2}
  }
  \\
  \textbf{
    Shaode Yu\textsuperscript{4},
    Xiaojun Qian\textsuperscript{1},
    Zixin Hu\textsuperscript{1,2,*}
  }
  \\[0.7em]
  {\small
    \textsuperscript{1}Institute of Artificial Intelligence Innovation and Industry,
    Fudan University, Shanghai, China
  }
  \\
  {\small
    \textsuperscript{2}Shanghai Academy of AI for Science,
    Shanghai, China
  }
  \\
  {\small
    \textsuperscript{3}Human Phenome Institute,
    Fudan University, Shanghai, China
  }
  \\
  {\small
    \textsuperscript{4}School of Information and Communication Engineering,
    Communication University of China, Beijing, China
  }
  \\[0.5em]
  {\footnotesize
    \texttt{bzfeng25@m.fudan.edu.cn},
    \texttt{muyuzhierchengse@gmail.com}
  }
  \\
  {\footnotesize
    \texttt{smwang@m.fudan.edu.cn},
    \texttt{cheng\_yuan@fudan.edu.cn}
  }
  \\
  {\footnotesize
    \texttt{yushaodecuc@cuc.edu.cn},
    \texttt{qianxiaojun@fudan.edu.cn}
  }
  \\[0.4em]
  {\small
  \textsuperscript{*}Correspondence:
  \texttt{huzixin@fudan.edu.cn}
  }
}

\date{}

\begin{document}
\maketitle

% ============================================================
\begin{abstract}
Non-injective mappings in neural networks map distinct inputs to the same representation, thereby implicitly inducing equivalence relations in the input space. However, the input differences eliminated by these mappings may still be required by downstream tasks, creating a mismatch between operator-induced indistinguishability and task-required distinctions. For non-injective linear operators realized in the current forward pass, their null spaces exactly characterize these invisible input variations. We propose Task-Relevant Null-Space Residuals (NSR), a general residual framework for non-injective linear mappings. NSR combines null-space component extraction from pre-mapping representations, member-level encoding and gating, and application-specific integration to exploit potentially task-relevant information under downstream supervision while preserving the original aggregation or merging rules. We evaluate NSR in two structurally different settings: token merging and graph aggregation. In token merging, NSR achieves higher semantic segmentation performance
than the corresponding compressed baselines in 34 out of 36 evaluated configurations, with a maximum observed gain of 31.51 mIoU points under strong compression. In graph aggregation, NSR achieves 100\% training accuracy on Tree-NeighborsMatch at depths \(d=2\text{--}6\) across three backbones, alongside gains on heterophilic node classification and molecular graph regression. Together, these results support null-space residuals as a practical complement to non-injective linear mappings, enabling downstream models to learn from input distinctions invisible in the original operator's output.
\end{abstract}

% ============================================================
% ============================================================
\section{Introduction}
\label{sec:introduction}

Many representation transformations in neural networks are non-injective:
distinct inputs can be mapped to the same output, making them
indistinguishable to any subsequent computation that depends only on that
output. Such indistinguishability does not necessarily hinder the task;
when these inputs require the same prediction, ignoring their differences
is appropriate. The issue is that inputs identified by the mapping may
still correspond to different task targets~\cite{jacobsen2018excessive}.
Therefore, a non-injective mapping not only transforms representations
but also determines, through its structure, which inputs are
indistinguishable in its output. These operator-induced input equivalences
need not be compatible with downstream task requirements
(Fig.~\ref{fig:motivation}).
Making a distinction invisible to an operator does not mean that
the task no longer requires it.

\begin{figure}[h]
    \centering
    \includegraphics[width=1.0\linewidth]{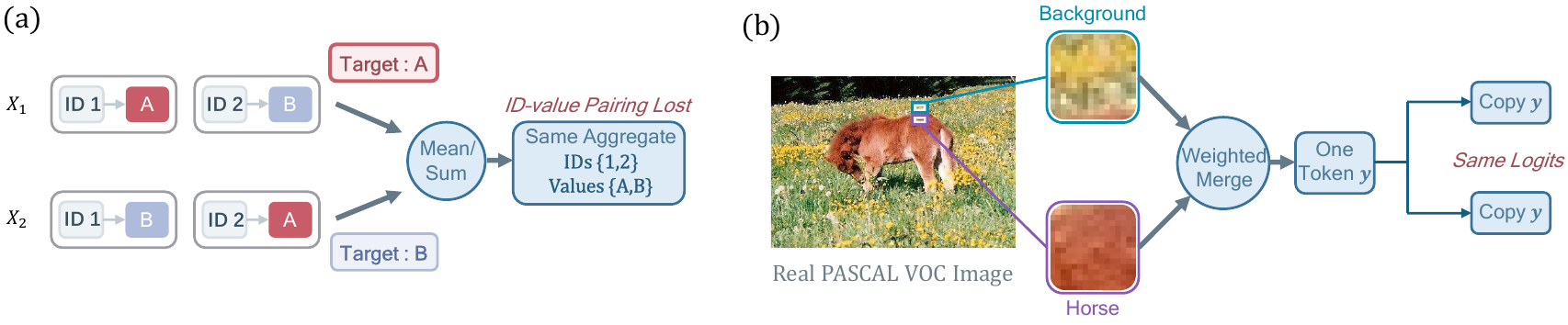}
    \caption{
        Operator-induced indistinguishability need not align with
        downstream task requirements: distinct inputs may produce
        the same mapped representation while requiring different
        task outputs.
    }
    \label{fig:motivation}
\end{figure}

Starting from this general problem, we focus on non-injective linear
mappings, including computational steps in adaptive operations that take
a linear form once the current member groups and weights have been
determined. The key reason for this scope is that, for a fixed linear
operator, two inputs produce the same output if and only if their
difference lies in its null space. Thus, the null space provides an
exact description of input variations invisible to the operator.
For operations with input-dependent grouping or weights, this
characterization is conditioned on the linear operator realized in the
current forward pass. Linear neighborhood aggregation in graph neural
networks and weighted token merging with the current groups and weights
fixed are two structurally different instances of this
setting~\cite{kipf2016semi,bolya2023tome}.

This perspective leads to the question studied in this work:
Can we retain the original mapping rule while providing the
model with a separate pathway to learn to exploit input variations
invisible in the original operator's output?

To this end, we propose Task-Relevant Null-Space Residuals (NSR),
a general residual framework for non-injective linear mappings,
with its overall structure shown in Fig.~\ref{fig:overview}.
NSR uses the current operator to explicitly extract null-space
components from pre-mapping representations as candidate
complementary signals.
Within a task-supervised complementary pathway, residual encoding,
member-wise gating, and application-specific integration jointly
transform these components into task-adaptive updates, enabling
downstream models to exploit distinctions invisible in the
original operator's output.
In graph aggregation and token merging, NSR instantiates this
pathway while preserving the original aggregation or merging rules.
\begin{figure}[t]
    \centering
    \includegraphics[width=1.0\linewidth]{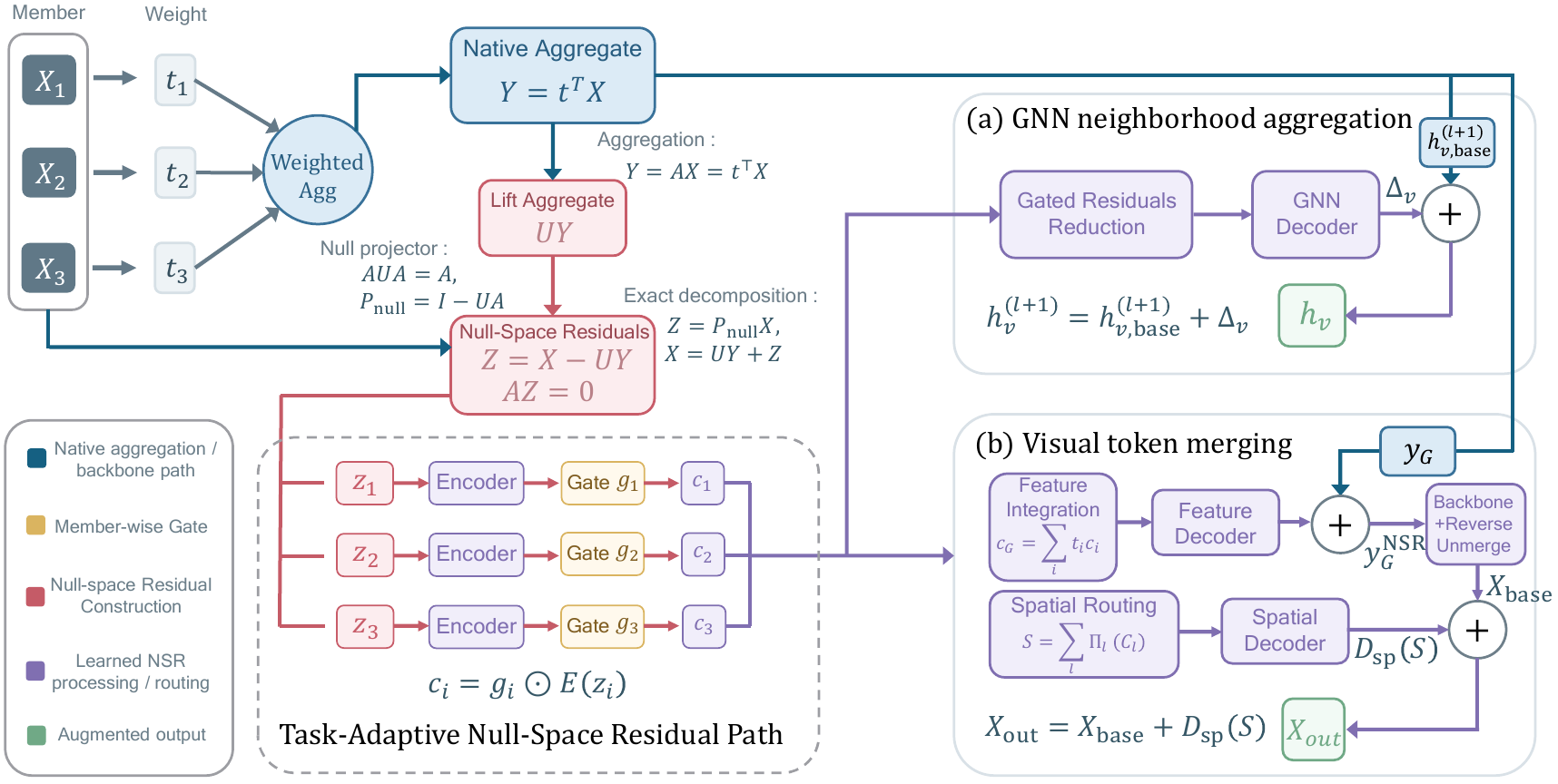}
    \caption{
        Overview of NSR. Null-space residuals are extracted from
        pre-mapping representations based on the current linear
        operator and transformed into usable complementary information
        through a learnable branch under task supervision.
        The original mapping branch retains its computation rule.
    }
    \label{fig:overview}
\end{figure}
Our main contributions are as follows:
\begin{itemize}
    \item \textbf{Operator--task mismatch.}
    We characterize the mismatch between the input equivalence classes
    induced by the currently realized non-injective linear operator
    and the distinctions required by downstream tasks, providing
    an operator-level basis for constructing complementary information.
    \item \textbf{Null-space residual framework.}
We propose NSR, a general task-supervised residual framework
for non-injective linear mappings. The framework is not limited
to null-space extraction but comprises operator-defined
null-space component extraction, member-level encoding and
gating, and application-specific integration.
    \item \textbf{Experimental evidence across structures and tasks.}
Token-merging experiments cover three compression methods, three
semantic-segmentation datasets, and multiple compression levels, with
NSR achieving higher mIoU than the corresponding compressed baselines
in 34 of 36 evaluated configurations.
In graph aggregation, the complete NSR pathway achieves 100\%
training accuracy on Tree-NeighborsMatch at depths
\(d=2\text{--}6\) across three backbones, alongside gains on
heterophilic node classification and molecular graph regression.
\end{itemize}

% ============================================================
\section{Related Work}
\label{sec:related}

\textbf{Representation invariance and null-space methods.}
Invariance to input variations does not always align with task
requirements. Neural networks can be overly sensitive to task-irrelevant
variations while exhibiting excessive invariance to task-relevant
changes~\citep{jacobsen2018excessive}. At the architectural level, i-RevNet maintains input reconstructability
through invertible intermediate
transformations~\citep{jacobsen2018irevnet}; LiftPool uses an invertible
subband decomposition and employs the detail subbands produced during
downsampling for subsequent upsampling~\citep{zhao2021liftpool}.
Null spaces have also been used in learning-based inverse problem
solving: null-space networks constrain learned reconstruction
corrections to the null space of the forward operator to preserve data
consistency~\citep{schwab2019deep}. NSR retains the existing non-injective main mapping without requiring
it to be made invertible; its null-space components are extracted
directly from accessible pre-mapping representations rather than
inferred from mapped outputs. Here, the null space defines the source
of candidate complementary signals, while a separate learnable branch
learns how to exploit this information under downstream task supervision.

\textbf{Graph aggregation and residual connections.}
Neighborhood aggregation is a core operation through which
message-passing GNNs construct node representations~\citep{gilmer2017neural}.
GCN uses degree-normalized weighted summation~\citep{kipf2016semi};
GraphSAGE introduces several neighborhood aggregation schemes, with its mean variant combining the neighborhood mean with the central
node representation~\citep{hamilton2017graphsage};
GIN uses sum aggregation and a multilayer perceptron, relating the
discriminative power of multiset aggregation to the expressive power
of GNNs~\citep{xu2019powerful}. More generally, Deep Sets studies permutation-invariant functions
through shared element-wise transformations and
aggregation~\citep{zaheer2017deepsets}.
PNA combines multiple aggregation statistics with degree-dependent
scalers to enrich neighborhood representations~\citep{corso2020pna}.
Another line of work focuses on the propagation and combination of
representations across layers: Jumping Knowledge combines node
representations from different propagation depths~\citep{xu2018jumping},
while GCNII supports deeper graph convolutional networks through
initial residual connections and identity
mappings~\citep{chen2020simple}. NSR differs in how its complementary signal is defined: it retains
the backbone's original aggregation branch and uses the null space
of the current local aggregation operator to extract operator-invisible
member-wise residuals from pre-aggregation representations.
These residuals are then transformed into node updates through
task-adaptive processing.

\textbf{Token merging in vision transformers.}
ViT represents an image as a sequence of embedded image
patches~\citep{dosovitskiy2021vit}.
Token merging summarizes multiple tokens into representative tokens
to shorten the sequence used in subsequent computation.
ToMe progressively merges similar tokens using lightweight
matching~\citep{bolya2023tome};
PiToMe introduces an energy score to select merging candidates,
prioritizing the preservation of relatively distinctive or isolated
tokens~\citep{tran2024pitome};
ALGM adopts a local-then-global merging strategy~\mbox{\citep{norouzi2024algm}};
DTEM uses a separate learnable embedding module to produce dedicated
representations for token matching~\citep{lee2024dtem};
MPM performs mean merging over mutually nearest-neighbor token
pairs~\citep{rave2026mpm}. For dense prediction tasks such as semantic segmentation, merged
representations also need to be mapped back to their original spatial
positions. ALGM and MPM copy merged representations back to member
positions using recorded merge
correspondences~\mbox{\citep{norouzi2024algm,rave2026mpm}}.
This operation restores the original token count and spatial
arrangement, but copying itself does not restore the original
representational differences among members of the same group. NSR focuses on member-wise distinctions eliminated by weighted
merging once the current groups and weights have been determined,
while preserving the original selection, matching, and
weight-computation rules.
The same null-space residual source is used through feature and
spatial paths to provide complementary information for subsequent
token computation and dense prediction at the original positions,
respectively.

% ============================================================
\section{Null-Space Residuals}
\label{sec:nsr}

We characterize operator-induced indistinguishability,
construct complementary null-space residuals, and instantiate
their task-supervised processing for graph aggregation
and token merging.

\subsection{Operator-Induced Indistinguishability and Task Relevance}
\label{sec:nsr_indistinguishability}

Let \(F:\mathcal{X}\rightarrow\mathcal{C}\) map inputs in
\(\mathcal{X}\) to representations in \(\mathcal{C}\).
If distinct inputs \(x_1,x_2\in\mathcal{X}\) satisfy
\(F(x_1)=F(x_2)\), any subsequent computation depending
only on \(F(x)\) cannot distinguish them.
This becomes a limitation when the downstream task requires
that distinction; not all collapsed distinctions are task-relevant.

We focus on linear mappings and operations that take a linear
form once the current groups and weights are fixed.
For a fixed linear operator \(A\) and inputs \(x_1,x_2\)
in its domain,
\begin{equation}
    Ax_1=Ax_2
    \quad\Longleftrightarrow\quad
    x_1-x_2\in\ker(A),
    \label{eq:indistinguishability}
\end{equation}
where \(\ker(A)\) is the null space of vectors mapped to zero
by \(A\), exactly characterizing operator-invisible input variations.
For input-dependent grouping or weights, all null-space statements
are conditioned on the operator \(A\) realized in the current
forward pass.

\subsection{Null-Space Complementarity}
\label{sec:nsr_complementarity}

Let \(X\in\mathbb{R}^{n\times d}\) contain \(n\) member
representations of dimension \(d\), and let
\(A\in\mathbb{R}^{k\times n}\) denote the current linear
operator, with output \(Y=AX\).
We choose a linear lift \(U\in\mathbb{R}^{n\times k}\)
satisfying \(AUA=A\)~\cite{benisrael2003generalized}, and define the null-space component as
\begin{equation}
    Z=(I_n-UA)X,
    \label{eq:nsr_definition}
\end{equation}
where \(I_n\) is the \(n\times n\) identity matrix and
\(U\) maps the output representation back to the member space.

\begin{proposition}[Null-space complementarity]
\label{prop:nsr_complementarity}
The component defined above satisfies
\begin{equation}
    AZ=0,
    \qquad
    X=UY+Z.
    \label{eq:nsr_complementarity}
\end{equation}
\end{proposition}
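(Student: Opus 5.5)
Both identities follow by direct substitution from the definition \eqref{eq:nsr_definition} and the generalized-inverse condition \(AUA=A\). No further structure of \(A\), \(U\), or \(X\) is needed. In particular, we do not require \(U\) to be a Moore--Penrose inverse, nor \(I_n-UA\) to be an orthogonal projector.

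For the first identity, we left-multiply \(Z=(I_n-UA)X\) by \(A\) and distribute:
\begin{equation*}
  AZ = (A - AUA)X .
\end{equation*}
Invoking \(AUA=A\) makes the bracket vanish, so \(AZ=0\). Read column by column, this says every column of \(Z\) lies in \(\ker(A)\). By \eqref{eq:indistinguishability}, these are therefore exactly the operator-invisible variations described in Section~\ref{sec:nsr_indistinguishability}.

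For the second identity, we expand the definition and substitute \(Y=AX\):
\begin{equation*}
  Z = X - UAX = X - UY .
\end{equation*}
Rearranging gives \(X=UY+Z\). This step uses only the definition of \(Y\) and does not need the condition on \(U\). That condition is required only for \(AZ=0\).

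There is no real obstacle here. The one point worth noting is that the lift \(U\) must be fixed for the current forward pass, together with \(A\), so that both identities hold for the realized operator, consistent with the conditioning remark in Section~\ref{sec:nsr_indistinguishability}. Optionally, we can remark that \(I_n-UA\) is idempotent: \((UA)^2=U(AUA)=UA\). Hence \(Z\) is a (generally oblique) projection of \(X\) onto \(\ker(A)\) along the range of \(UA\), and the decomposition \(X=UY+Z\) is a direct-sum splitting.
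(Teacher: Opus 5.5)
Your proof is correct and matches the paper's argument essentially line for line: \(AZ=(A-AUA)X=0\) follows from \(AUA=A\), and \(X=UY+Z\) follows directly from the definition with \(Y=AX\). Your extra remarks are also correct and consistent with the paper's appendix derivation: the second identity does not need \(AUA=A\), and \(I_n-UA\) is a generally oblique idempotent projector onto \(\ker(A)\).
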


\begin{proof}
Since \(AUA=A\), we have \(AZ=(A-AUA)X=0\).
Moreover, the definition directly gives
\(UY+Z=UAX+(I_n-UA)X=X\).
\end{proof}

Thus, every feature channel of \(Z\) lies in \(\ker(A)\).
For given \(A\) and \(U\), \(Y\) and \(Z\)
uniquely determine \(X\); any auxiliary signal \(H(X)\) can
thus be written as \(H(UY+Z)\).
At fixed \(Y\), member differences are exactly differences in \(Z\).
The lift \(U\) may depend on the operator and need not induce
an orthogonal projection.
NSR uses this decomposition to expose operator-invisible
differences for task-supervised complementary processing.

\subsection{Task-Supervised Complementary Pathway}
\label{sec:nsr_task_adaptive}

Let \(z_i\in\mathbb{R}^{d}\) denote the \(i\)-th member
residual. NSR constructs its code as
\begin{equation}
    c_i=g_i\odot E(z_i),
    \label{eq:nsr_member_code}
\end{equation}
where \(E:\mathbb{R}^{d}\rightarrow\mathbb{R}^{r}\)
is a residual encoder with code dimension \(r\),
\(g_i\in\mathbb{R}^{r}\) is a learnable gate generated
from the member residual and its context, and
\(\odot\) denotes element-wise multiplication.

Member-level processing precedes reaggregation.
For a linear transformation \(B\in\mathbb{R}^{d\times r}\)
shared across members, reaggregation with the original
operator gives \(A(ZB)=(AZ)B=0\).
Member-dependent gating can alter their relative contributions,
so the processed codes need not satisfy the same
cancellation constraint.

Residual extraction specifies the source of candidate complementary
information, while member-level processing and application-specific
integration make this source usable for downstream prediction.
Together, these stages constitute the complete NSR method.
Depending on the application, the codes are aggregated or spatially
routed and mapped to complementary updates, with the branch trained
under the downstream task loss.
The null-space properties and exact decomposition apply to the
complete residual $Z$ before encoding; member and contextual
representations may additionally condition its processing through
the gates.

\subsection{NSR for Graph Aggregation}
\label{sec:nsr_graph}

At layer \(l\), consider the \(K_v\) members participating
in the current aggregation at node \(v\).
The NSR branch generates messages using a linear map
shared across members and stacks them row-wise as
\(M_v\in\mathbb{R}^{K_v\times d}\).
Let \(t_v\in\mathbb{R}^{K_v}\) denote the aggregation
coefficients actually used by the backbone.
The corresponding local operator is \(A_v=t_v^\top\).

For \(t_v\neq0\), choosing
\(U_v=t_v/(t_v^\top t_v)\) gives \(A_vU_v=1\).
The resulting residual is
\begin{equation}
    Z_v
    =
    \left(
        I_{K_v}
        -
        \frac{t_vt_v^\top}{t_v^\top t_v}
    \right)M_v,
    \qquad
    t_v^\top Z_v=0,
    \label{eq:graph_nsr}
\end{equation}
which projects along the member dimension and extracts
message variations invisible to the current local aggregation.
The member set and coefficients follow the corresponding
backbone: GCN uses normalized adjacency coefficients
including self-loops; GraphSAGE uses neighborhood-mean
weights, with its separate root path handled by the backbone;
and GIN includes neighbor terms and a root term weighted
by \(1+\epsilon^{(l)}\).

The graph branch directly processes residuals in the message
space, taking \(E\) to be the identity map and \(r=d\).
Let \(z_{v,i}\) and \(g_{v,i}\) denote the member residual
and its gate, respectively.
The node update is
\begin{equation}
    \Delta_v
    =
    F_{\mathrm{out},v}\!\left(
        \sum_{i=1}^{K_v} g_{v,i}\odot z_{v,i}
    \right),
    \qquad
    h_v^{(l+1)}
    =
    h_{v,\mathrm{base}}^{(l+1)}
    +
    \Delta_v.
    \label{eq:graph_nsr_update}
\end{equation}
Here, \(h_{v,\mathrm{base}}^{(l+1)}\) is the update
produced by the original backbone on the current input.
The function \(F_{\mathrm{out},v}\) includes
neighborhood-dependent scaling and an output transformation
that maps the residual to the backbone output dimension,
with learnable parameters shared across nodes.

\subsection{NSR for Token Merging}
\label{sec:nsr_token}

Consider a merge group \(\mathcal{G}\) of \(K\) tokens
\(x_i\in\mathbb{R}^{d}\), with original merge weights satisfying
\(\sum_{i\in\mathcal{G}}t_i=1\).
Collecting these weights in \(t\), the current merge operator
is \(A_{\mathcal{G}}=t^\top\).

To match copy-based unmerging, we choose
\(U_{\mathcal{G}}=\mathbf{1}_K\), where \(\mathbf{1}_K\)
is the all-ones vector.
Since \(A_{\mathcal{G}}U_{\mathcal{G}}=1\),
Sec.~\ref{sec:nsr_complementarity} gives
\begin{equation}
    y_{\mathcal{G}}
    =
    \sum_{i\in\mathcal{G}}t_i x_i,
    \qquad
    z_i=x_i-y_{\mathcal{G}},
    \qquad
    \sum_{i\in\mathcal{G}}t_i z_i=0.
    \label{eq:token_nsr}
\end{equation}
NSR encodes each member residual \(z_i\) as
\(c_i\in\mathbb{R}^{r}\) following
Sec.~\ref{sec:nsr_task_adaptive} and uses the same codes
through feature and spatial residual paths.

\textbf{Feature residual.}
NSR aggregates the codes with the original merge weights
and adds the decoded correction to the merged representation:
\begin{equation}
    c_{\mathcal{G}}
    =
    \sum_{i\in\mathcal{G}}t_i c_i,
    \qquad
    y_{\mathcal{G}}^{\mathrm{NSR}}
    =
    y_{\mathcal{G}}
    +
    D_{\mathrm{feat}}(c_{\mathcal{G}}).
    \label{eq:token_feature_integration}
\end{equation}
Here, \(D_{\mathrm{feat}}:
\mathbb{R}^{r}\rightarrow\mathbb{R}^{d}\) decodes the group
code into token features.
The updated representation continues through the backbone
as a single token, allowing within-group variations to
affect subsequent transformations.

\textbf{Spatial residual.}
Copy-based unmerging cannot recover within-group distinctions.
NSR therefore retains member codes and their correspondence
to original patch positions.

For \(L\) merge events, let
\(C_l\in\mathbb{R}^{N_l\times r}\) collect the \(N_l\)
member codes retained at event \(l\).
The routing operation \(\Pi_l\) assigns these codes to
the \(N_0\) original patch positions using the recorded
correspondence.
All events share the same code dimension \(r\), giving
\begin{equation}
    S
    =
    \sum_{l=1}^{L}\Pi_l(C_l),
    \qquad
    X_{\mathrm{out}}
    =
    X_{\mathrm{base}}
    +
    D_{\mathrm{sp}}(S).
    \label{eq:token_spatial_routing}
\end{equation}
Here, \(S\in\mathbb{R}^{N_0\times r}\) accumulates spatial
codes, and \(X_{\mathrm{base}}\in\mathbb{R}^{N_0\times d}\)
is the representation after standard unmerging, including
preceding feature-residual updates.
At the end of the backbone, \(D_{\mathrm{sp}}\) decodes
the accumulated codes into \(d\)-dimensional position-specific
updates; the codes are not processed as an additional
token sequence by the Transformer.
Both paths preserve the original token-selection, matching,
and weight-computation rules, as well as the predefined
merge schedule.

% ============================================================

\section{Experiments}
\label{sec:experiments}

\subsection{Experimental Setup}
\label{sec:experimental-setup}

Semantic segmentation is evaluated on the official validation splits of
Pascal VOC 2012~\cite{everingham2012voc},
Cityscapes~\cite{cordts2016cityscapes}, and
ADE20K~\cite{zhou2017ade20k}, using mIoU.
All models use an ImageNet-pretrained~\cite{russakovsky2015imagenet}
DeiT-Tiny/16 backbone~\cite{touvron2021training} and a linear
segmentation head.
We compare ToMe~\cite{bolya2023tome},
PiToMe~\cite{tran2024pitome}, and MPM~\cite{rave2026mpm}
with their NSR-augmented counterparts, using the uncompressed
Full-ViT as a reference.

Graph experiments use GCN~\cite{kipf2016semi},
GraphSAGE~\cite{hamilton2017graphsage}, and GIN~\cite{xu2019powerful}.
Heterophilic node classification uses Roman-empire, Amazon-ratings,
Minesweeper, Tolokers, and Questions~\cite{platonov2023critical}.
The first two use accuracy, while the others use ROC-AUC.
Tree-NeighborsMatch~\cite{alon2021bottleneck} evaluates training fit
on binary trees of depths \(2\)--\(8\).
Molecular graph regression uses the \(12{,}000\)-graph ZINC
subset~\cite{dwivedi2023benchmarking}, evaluated by test MAE.
Heterophilic node classification and ZINC use four message-passing
layers with a hidden dimension of \(128\); the tree task at depth
\(d\) uses \(d+1\) message-passing layers with a hidden dimension
of \(32\).

Each Original--NSR comparison uses the same backbone configuration,
optimization settings, and training budget.
Segmentation models additionally share pretrained parameter
initialization, data ordering, and compression-control configurations.
Segmentation models are trained for \(100\) epochs and evaluated
using the final checkpoint; heterophilic node classification and
ZINC select checkpoints by validation performance.
Specific network configurations, preprocessing, and training
protocols are provided in Appendix~\ref{app:implementation}. The computation-reduction operating point (WP) is defined as the
ratio of Full-ViT GFLOPs to those of the original compressed baseline.
Computation is estimated analytically from the actual forward
structure.
Cityscapes computation is reported per input window, corresponding
to one window in full-image sliding-window inference.
Data processing, complete training configurations, and
computational-accounting conventions are provided in
Appendix~\ref{app:implementation}, while the construction of
compression configurations is described in
Appendix~\ref{app:token-operating-points}.

\subsection{Results}
\label{sec:results}

\subsubsection{Token Merging}
\label{sec:token-results}

Table~\ref{tab:token-main} reports semantic-segmentation performance
and the absolute gain over the corresponding compressed baseline,
\(\Delta_{\mathrm{NSR}}
=\mathrm{mIoU}_{\mathrm{NSR}}-\mathrm{mIoU}_{\mathrm{Base}}\).

\begin{table*}[t]
    \centering

    \caption{
        Semantic-segmentation results across merging methods and compression configurations.
        Bold marks the higher mIoU in each Original--NSR pair.
        $\Delta_{\mathrm{NSR}}$ is computed from unrounded mIoU.
    }
    \label{tab:token-main}

    % ==========================================================
    % Compact table settings
    % ==========================================================
    \begingroup

    % Font: smaller than \small, but still readable
    \fontsize{7.6}{7.8}\selectfont

    % Horizontal spacing
    \setlength{\tabcolsep}{3.0pt}

    % Main source of row-height compression
    \renewcommand{\arraystretch}{1.00}

    % IMPORTANT:
    % booktabs normally inserts substantial whitespace
    % above/below every rule. Compress it while keeping all rules.
    \setlength{\aboverulesep}{0.14ex}
    \setlength{\belowrulesep}{0.14ex}
    \setlength{\cmidrulesep}{0.10ex}

    \begin{tabular}{@{}llc cc cc c@{}}
        \toprule
        & & &
        \multicolumn{2}{c}{Original} &
        \multicolumn{2}{c}{+NSR} & \\[-0.4pt]

        \cmidrule(lr){4-5}
        \cmidrule(lr){6-7}

        Dataset & Method & WP
        & GFLOPs $\downarrow$ & mIoU $\uparrow$
        & GFLOPs $\downarrow$ & mIoU $\uparrow$
        & $\Delta_{\mathrm{NSR}} \uparrow$ \\
        \midrule

        % ======================================================
        % VOC
        % ======================================================
        \multirow{13}{*}{VOC}
        & Full-ViT & \(1.0\times\)
        & 1.254 & 64.85 & -- & -- & -- \\
        \cmidrule(lr){2-8}

        & \multirow{4}{*}{ToMe}
        & \(1.8\times\)
        & 0.698 & 62.81
        & 0.750 & \textbf{64.23}
        & +1.43 \\

        & & \(2.5\times\)
        & 0.492 & 44.58
        & 0.533 & \textbf{59.54}
        & +14.95 \\

        & & \(3.4\times\)
        & 0.373 & 18.31
        & 0.407 & \textbf{49.82}
        & +31.51 \\

        & & \(3.8\times\)
        & 0.333 & 16.98
        & 0.365 & \textbf{46.23}
        & +29.25 \\

        \cmidrule(lr){2-8}

        & \multirow{4}{*}{PiToMe}
        & \(1.8\times\)
        & 0.698 & 61.48
        & 0.736 & \textbf{62.44}
        & +0.96 \\

        & & \(2.5\times\)
        & 0.501 & 55.77
        & 0.536 & \textbf{59.19}
        & +3.42 \\

        & & \(3.4\times\)
        & 0.368 & 43.70
        & 0.400 & \textbf{54.02}
        & +10.31 \\

        & & \(3.8\times\)
        & 0.332 & 37.32
        & 0.362 & \textbf{50.66}
        & +13.34 \\

        \cmidrule(lr){2-8}

        & \multirow{4}{*}{MPM}
        & \(1.8\times\)
        & 0.699 & 61.99
        & 0.714 & \textbf{62.64}
        & +0.65 \\

        & & \(2.4\times\)
        & 0.512 & 58.87
        & 0.530 & \textbf{59.74}
        & +0.86 \\

        & & \(3.0\times\)
        & 0.429 & 54.93
        & 0.448 & \textbf{57.63}
        & +2.70 \\

        & & \(3.3\times\)
        & 0.376 & 51.34
        & 0.395 & \textbf{54.85}
        & +3.51 \\

        \midrule

        % ======================================================
        % Cityscapes
        % ======================================================
        \multirow{13}{*}{Cityscapes}
        & Full-ViT & \(1.0\times\)
        & 30.533 & 71.83 & -- & -- & -- \\
        \cmidrule(lr){2-8}

        & \multirow{4}{*}{ToMe}
        & \(1.8\times\)
        & 16.983 & \textbf{71.71}
        & 18.697 & 71.67
        & -0.04 \\

        & & \(2.5\times\)
        & 12.191 & 65.83
        & 13.633 & \textbf{70.59}
        & +4.76 \\

        & & \(3.4\times\)
        & 8.984 & 54.08
        & 10.155 & \textbf{69.41}
        & +15.33 \\

        & & \(3.8\times\)
        & 8.037 & 47.33
        & 9.125 & \textbf{68.16}
        & +20.83 \\

        \cmidrule(lr){2-8}

        & \multirow{4}{*}{PiToMe}
        & \(1.8\times\)
        & 16.910 & 69.70
        & 18.118 & \textbf{69.80}
        & +0.10 \\

        & & \(2.5\times\)
        & 12.213 & 69.01
        & 13.370 & \textbf{71.28}
        & +2.27 \\

        & & \(3.4\times\)
        & 8.962 & 65.77
        & 10.025 & \textbf{69.00}
        & +3.23 \\

        & & \(3.8\times\)
        & 8.041 & 64.21
        & 9.062 & \textbf{68.95}
        & +4.74 \\

        \cmidrule(lr){2-8}

        & \multirow{4}{*}{MPM}
        & \(1.8\times\)
        & 16.817 & 70.86
        & 17.098 & \textbf{71.34}
        & +0.48 \\

        & & \(2.6\times\)
        & 11.918 & 67.86
        & 12.328 & \textbf{69.75}
        & +1.89 \\

        & & \(3.0\times\)
        & 10.067 & 67.21
        & 10.557 & \textbf{68.40}
        & +1.19 \\

        & & \(3.4\times\)
        & 9.090 & 65.20
        & 9.624 & \textbf{68.02}
        & +2.82 \\

        \midrule

        % ======================================================
        % ADE20K
        % ======================================================
        \multirow{13}{*}{ADE20K}
        & Full-ViT & \(1.0\times\)
        & 10.463 & 37.60 & -- & -- & -- \\
        \cmidrule(lr){2-8}

        & \multirow{4}{*}{ToMe}
        & \(1.8\times\)
        & 5.817 & 36.60
        & 6.347 & \textbf{37.41}
        & +0.81 \\

        & & \(2.5\times\)
        & 4.202 & 31.44
        & 4.643 & \textbf{37.07}
        & +5.64 \\

        & & \(3.4\times\)
        & 3.085 & 17.67
        & 3.443 & \textbf{34.11}
        & +16.44 \\

        & & \(3.8\times\)
        & 2.750 & 12.44
        & 3.082 & \textbf{33.74}
        & +21.31 \\

        \cmidrule(lr){2-8}

        & \multirow{4}{*}{PiToMe}
        & \(1.8\times\)
        & 5.801 & 36.38
        & 6.181 & \textbf{37.94}
        & +1.56 \\

        & & \(2.5\times\)
        & 4.180 & 34.78
        & 4.540 & \textbf{35.95}
        & +1.17 \\

        & & \(3.4\times\)
        & 3.076 & 31.39
        & 3.404 & \textbf{35.65}
        & +4.26 \\

        & & \(3.8\times\)
        & 2.757 & 30.07
        & 3.071 & \textbf{34.17}
        & +4.11 \\

        \cmidrule(lr){2-8}

        & \multirow{4}{*}{MPM}
        & \(1.8\times\)
        & 5.804 & \textbf{37.49}
        & 5.977 & 37.48
        & -0.01 \\

        & & \(2.4\times\)
        & 4.443 & 36.59
        & 4.656 & \textbf{36.93}
        & +0.34 \\

        & & \(3.2\times\)
        & 3.304 & 33.57
        & 3.454 & \textbf{35.74}
        & +2.16 \\

        & & \(3.4\times\)
        & 3.055 & 32.69
        & 3.213 & \textbf{35.38}
        & +2.69 \\

        \bottomrule
    \end{tabular}

    \endgroup
\end{table*}

Across the 36 evaluated configurations, NSR achieves higher mIoU than
the corresponding compressed baseline in 34 cases. The two lower results
occur under mild compression and differ from the corresponding baselines
by no more than 0.04 points. The observed gains are generally larger
under stronger compression, with ToMe showing maximum gains of 31.51
and 20.83 points on Pascal VOC and Cityscapes, respectively. These
results indicate that NSR can provide substantial performance recovery
across different merging mechanisms, with the magnitude varying across
operators and compression configurations. NSR can also outperform a less compressed original model
while requiring less computation.
On ADE20K, ToMe with NSR at the \(3.4\times\) operating point
achieves \(34.11\) mIoU at \(3.443\) GFLOPs, exceeding the
\(31.44\) mIoU achieved by the original \(2.5\times\) ToMe
at \(4.202\) GFLOPs.

\subsubsection{Graph Aggregation}
\label{sec:graph-results}

\textbf{Tree-NeighborsMatch.}
This task requires the root to predict the value at the leaf
associated with a query key, and therefore depends on key--value
associations.
Consider the first local aggregation after the concatenated key
and value features undergo an initial linear embedding:
each member representation consists of a key encoding,
a value encoding, and a shared bias.
Fix two sibling leaves with distinct keys, swap two values with
distinct encodings in the message space, and keep all other
inputs and aggregation coefficients unchanged.
If the two leaves have equal coefficients, the aggregate at
their parent remains unchanged; however, when the query is fixed
to one of these keys, the correct answer changes.
This gives an explicit instance of task-relevant local
indistinguishability.

Following the task construction and training-fit evaluation protocol of \citet{alon2021bottleneck}, we use Tree-NeighborsMatch to diagnose whether GNNs can fit long-range key--value associations. Training accuracy is the benchmark's intended metric for detecting fitting failures as tree depth increases. We use $d+1$ message-passing layers on binary trees of depths $d=2$--$8$ and adopt random seed $11$ from their reference implementation. Each model is trained once at each depth, and we report the highest training accuracy attained during that run. As shown in Figure~3, the original backbones exhibit degraded fitting as depth increases, whereas all three NSR variants achieve 100\% training accuracy at $d=2$--$6$ and substantially outperform their counterparts at $d=7,8$. These results support the complementary pathway's effectiveness in improving training fit.

\begin{figure*}[t]
    \centering
    \includegraphics[width=0.7\linewidth]{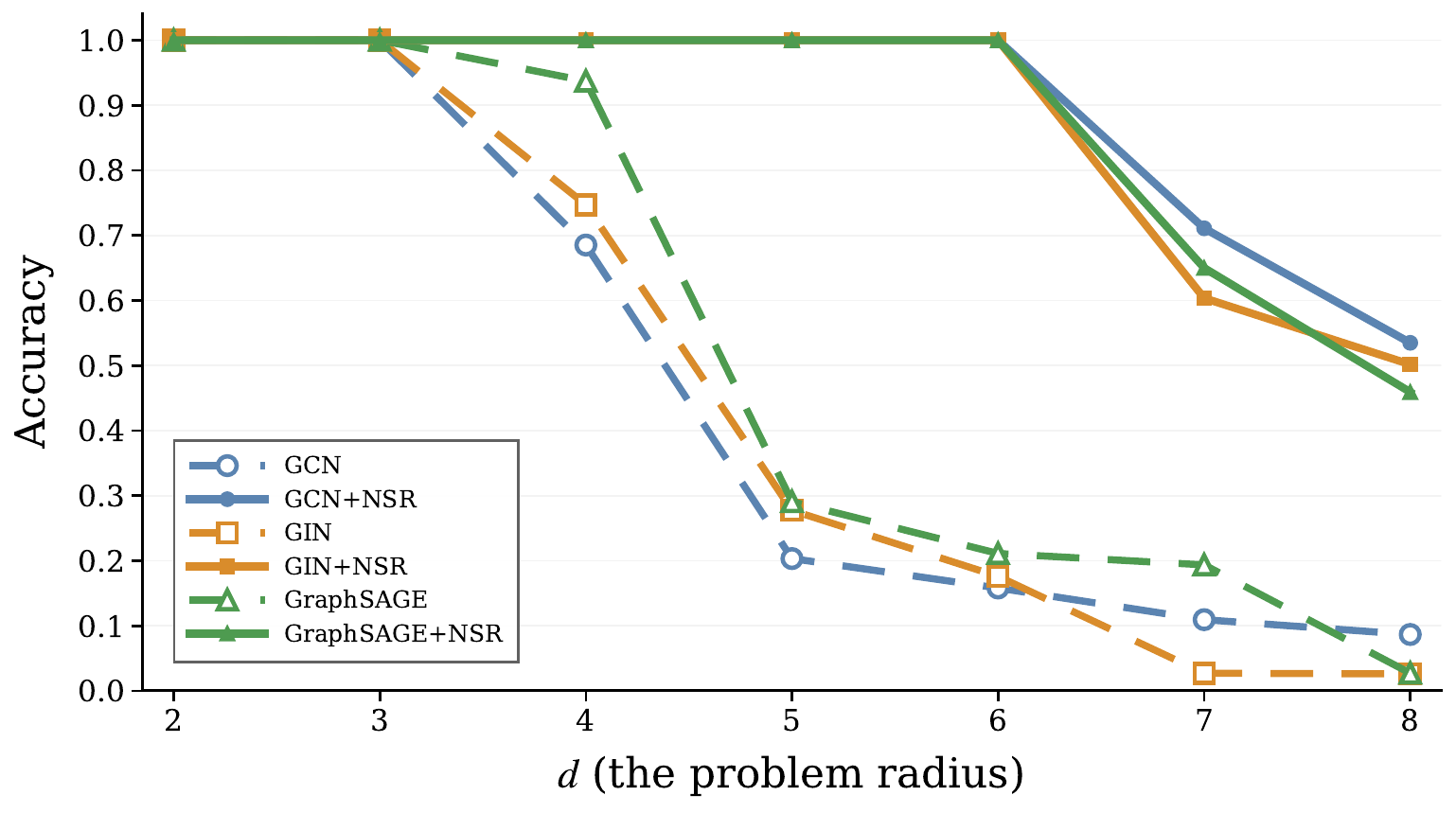}
    \caption{
        Training accuracy on Tree-NeighborsMatch across tree depths.
        Dashed and solid curves denote the original GCN, GIN,
        and GraphSAGE models and their NSR variants, respectively.
        Each point is the highest training accuracy attained
        in the corresponding run.
    }
    \label{fig:tree-depth}
\end{figure*}

\textbf{Heterophilic node classification.}
In Table~\ref{tab:heterophily-main}, NSR increases the reported
metric in \(13\) of the \(15\) backbone--dataset combinations.
The largest gains occur on Roman-empire, where accuracy improves
by \(10.85\), \(6.12\), and \(8.25\) percentage points for
GCN, GraphSAGE, and GIN, respectively.

\begin{table*}[t]
    \centering
    \caption{
        Node-classification results on the five heterophilic graph benchmarks.
        Values are the mean and sample standard deviation (\%) over the ten official splits.
        The higher score within each backbone pair is shown in bold.
    }
    \label{tab:heterophily-main}
    \scriptsize
    \setlength{\tabcolsep}{3.5pt}
    \renewcommand{\arraystretch}{1.0}
    \begin{tabular}{llccccc}
        \toprule
        Backbone & Variant
        & \shortstack{Roman-empire\\Acc. (\%) $\uparrow$}
        & \shortstack{Amazon-ratings\\Acc. (\%) $\uparrow$}
        & \shortstack{Minesweeper\\ROC-AUC (\%) $\uparrow$}
        & \shortstack{Tolokers\\ROC-AUC (\%) $\uparrow$}
        & \shortstack{Questions\\ROC-AUC (\%) $\uparrow$} \\
        \midrule

        \multirow{2}{*}{GCN}
        & Original
        & \(72.05 \pm 0.64\)
        & \(49.41 \pm 0.57\)
        & \(90.20 \pm 0.66\)
        & \(\mathbf{84.90 \pm 0.90}\)
        & \(75.32 \pm 1.35\) \\
        & +NSR
        & \(\mathbf{82.90 \pm 0.71}\)
        & \(\mathbf{49.67 \pm 0.57}\)
        & \(\mathbf{92.09 \pm 0.63}\)
        & \(84.36 \pm 0.73\)
        & \(\mathbf{78.25 \pm 1.21}\) \\
        \midrule

        \multirow{2}{*}{GIN}
        & Original
        & \(74.00 \pm 0.79\)
        & \(49.56 \pm 0.55\)
        & \(86.90 \pm 0.57\)
        & \(83.06 \pm 0.88\)
        & \(74.81 \pm 1.57\) \\
        & +NSR
        & \(\mathbf{82.25 \pm 0.65}\)
        & \(\mathbf{50.47 \pm 0.57}\)
        & \(\mathbf{89.41 \pm 1.09}\)
        & \(\mathbf{83.18 \pm 0.87}\)
        & \(\mathbf{75.06 \pm 1.40}\) \\
        \midrule

        \multirow{2}{*}{GraphSAGE}
        & Original
        & \(81.59 \pm 0.55\)
        & \(52.33 \pm 0.37\)
        & \(\mathbf{93.37 \pm 0.40}\)
        & \(83.28 \pm 0.56\)
        & \(74.82 \pm 0.91\) \\
        & +NSR
        & \(\mathbf{87.71 \pm 0.61}\)
        & \(\mathbf{52.72 \pm 0.75}\)
        & \(92.75 \pm 0.48\)
        & \(\mathbf{84.50 \pm 0.63}\)
        & \(\mathbf{75.07 \pm 2.16}\) \\

        \bottomrule
    \end{tabular}
\end{table*}

Platonov et al.~\cite{platonov2023critical} identify Roman-empire
as the only one of these five datasets with Label Informativeness
(LI) appreciably above zero.
LI quantifies how much information a neighbor's label provides
about a node's label~\cite{platonov2023characterizing}.
The larger NSR gains on Roman-empire are consistent with this
label structure: preserving fine-grained distinctions compressed
by aggregation may be more valuable when neighborhoods contain
stronger task-predictive information.
Performance changes vary across datasets and backbones, consistent with the premise that residual utility depends on task relevance and the model's ability to exploit it.
LI is defined from graph labels and serves as auxiliary context
here, rather than a direct measurement of the hidden-feature
residuals processed by NSR.

\textbf{Label-based null-space diagnostic.}
We further examine the relationship between NSR gains and label
structure in the aggregation null space on Roman-empire.
We project member one-hot label encodings onto the null space
of the corresponding local aggregation operator, compute
normalized projection energies, and average them over the
target node's computation tree with weights given by local
member counts.
This yields a node-level diagnostic score.
The score is computed after training and is not used for
training, hyperparameter selection, or model selection.
Its full definition is provided in Appendix~\ref{app:trnsil}.

Figure~\ref{fig:trnsil-accuracy} shows that the original backbones'
mean accuracy generally decreases across higher score bins,
while the NSR variants remain relatively stable, producing
an overall widening accuracy gap.
This association is consistent with the motivation of learning
to exploit aggregation-invisible distinctions.
The corresponding gain curves are provided in
Appendix~\ref{app:trnsil}.

\begin{figure*}[t]
    \centering
    \includegraphics[width=0.7\linewidth]{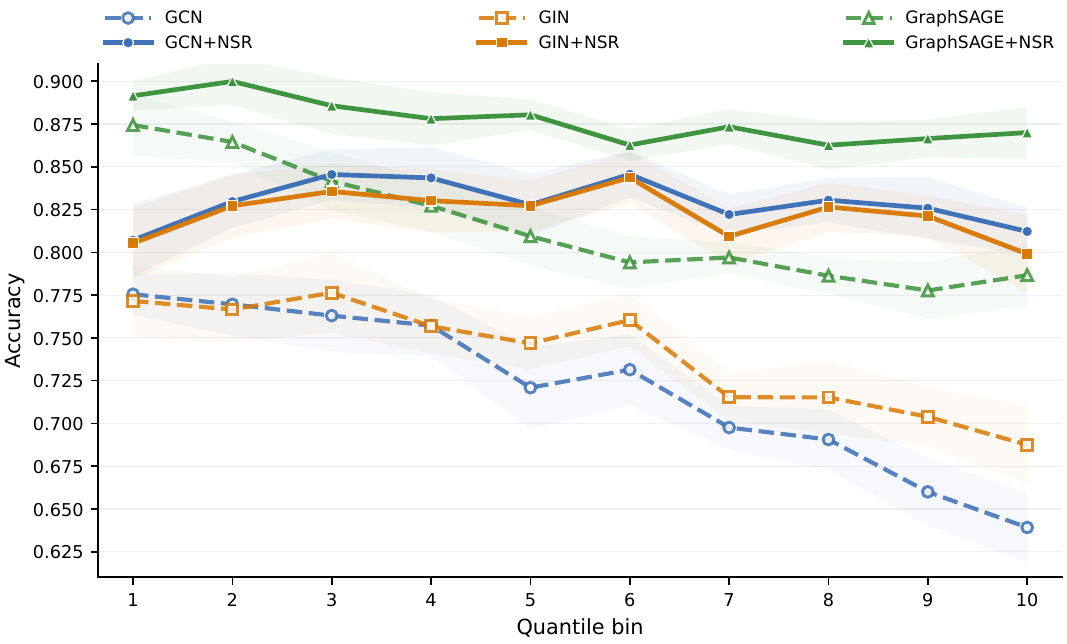}
    \caption{
        Within each official split, test nodes are divided into
ten approximately equal-frequency bins, with the horizontal axis
ordering the bins from low to high scores.
Lines show the mean accuracy of the corresponding bin across the
ten official splits, and shaded bands of the same color indicate
the mean $\pm$ one sample standard deviation.
    }
    \label{fig:trnsil-accuracy}
\end{figure*}

\textbf{ZINC graph regression.}
To test whether NSR can serve as a more general mechanism
for compensating information discarded by aggregation,
we evaluate graph-level molecular property regression on ZINC.
This setting differs from the preceding experiments in data
semantics, prediction granularity, and learning objective:
the model predicts a continuous molecular property through
repeated local aggregation and graph-level readout.
To keep aggregation backbones consistent across graph tasks,
we retain the original node-feature-based formulations of
GCN, GraphSAGE, and GIN, using only atom types and molecular
connectivity without bond-type attributes.

Table~\ref{tab:zinc-results} shows that NSR lowers the mean
test MAE of all three backbones, with absolute reductions
of \(0.1785\), \(0.0360\), and \(0.0542\) for GCN, GIN,
and GraphSAGE, respectively.
This supports the effectiveness of the complete NSR pathway
beyond matching and node classification, extending to
graph-level regression.

\begin{table}[t]
    \centering
    \caption{
        Test MAE on the ZINC subset.
        Values are the mean and standard deviation over
        ten random seeds; lower values are better.
    }
    \label{tab:zinc-results}
    \small
    \setlength{\tabcolsep}{7pt}
    \renewcommand{\arraystretch}{1.05}
    \begin{tabular}{lcc}
        \toprule
        Backbone & Original & +NSR \\
        \midrule
        GCN
        & \(0.4732 \pm 0.0067\)
        & \(\mathbf{0.2947 \pm 0.0050}\) \\
        GIN
        & \(0.3451 \pm 0.0093\)
        & \(\mathbf{0.3091 \pm 0.0093}\) \\
        GraphSAGE
        & \(0.4373 \pm 0.0086\)
        & \(\mathbf{0.3831 \pm 0.0125}\) \\
        \bottomrule
    \end{tabular}
\end{table}

These graph experiments provide complementary evidence:
Tree-NeighborsMatch examines fitting under explicit local
mismatch; the heterophily experiments extend evaluation to
real graph data, with the label-based diagnostic examining
how predictive gains relate to label structure; and ZINC
extends validation to graph-level regression.
Together, they support using null-space residuals to
complement the original aggregation.

Overall, experiments across token merging and graph aggregation support the practical value of NSR in exploiting operator-invisible member distinctions to improve task performance while preserving the original mapping rules. Comparisons and ablations in both settings provide further evidence for the framework's design (Appendix~\ref{app:additional}).
% ============================================================
\section{Conclusion}

Motivated by the potential mismatch between operator-induced input equivalences and downstream task requirements, we propose Task-Relevant Null-Space Residuals (NSR), a general residual framework for non-injective linear mappings. NSR combines operator-defined null-space residual extraction, member-level encoding and gating, and application-specific integration into a task-supervised complementary pathway while preserving the original aggregation or merging rules. The null space of the currently realized linear operator defines candidate complementary information, while downstream supervision guides its use. Experiments in graph aggregation and token merging demonstrate improvements across multiple models and configurations. These results support a practical design principle: for many-to-one neural computation, explicitly providing access to operator-invisible variations has practical value.

% ============================================================
\bibliographystyle{unsrtnat}
\bibliography{references}

% ============================================================
\appendix

\section{Implementation and Experimental Details}
\label{app:implementation}

This appendix describes the implementation of NSR for token merging
and graph aggregation and supplements the experimental settings in
the main text.
In both applications, NSR constructs member residuals using the
members and weights of the current aggregation operation and
transforms them into task-adaptive updates through a learnable branch.

\subsection{Datasets, Baselines, and Experimental Protocols}
\label{app:training-evaluation}

This section provides the datasets, comparison settings, training and
evaluation protocols, and computational-accounting conventions for the
experiments in the main text. The implementations of NSR for token merging
and graph aggregation are described in
Appendices~\ref{app:token-implementation}
and~\ref{app:graph-implementation}, respectively.

\paragraph{Semantic-segmentation data and preprocessing.}
Pascal VOC 2012~\cite{everingham2012voc} contains \(20\) foreground
classes and one background class, with \(1{,}464\) training images and
\(1{,}449\) validation images.
Cityscapes~\cite{cordts2016cityscapes} uses the fine-annotation split,
containing \(2{,}975\) training images and \(500\) validation images with
\(19\) evaluated semantic classes.
ADE20K~\cite{zhou2017ade20k} contains \(20{,}210\) training images and
\(2{,}000\) validation images covering \(150\) semantic classes.
All experiments use the official training and validation splits.

VOC 2012 retains its original class encoding.
ADE20K maps valid class labels to \(0\)--\(149\), and Cityscapes maps the
original annotations to its \(19\) training classes.
Invalid labels are assigned the ignore index \(255\) and excluded from
both the loss and evaluation metrics.
All images are normalized using the ImageNet channel means and
standard deviations.

For VOC and ADE20K, training augmentation consists of
aspect-ratio-preserving random resizing, random cropping, and horizontal
flipping with probability \(0.5\).
The resizing scale is sampled from \(0.5\)--\(2.0\), and the crop sizes
are \(224\times224\) and \(512\times512\), respectively.
Validation resizes the shorter image side to \(224\) and \(512\),
respectively, followed by a center crop of the corresponding size.
Cityscapes uses random resizing, photometric distortion,
\(512\times1024\) random cropping, and horizontal flipping.
Validation performs sliding-window inference on full images, using a
\(512\times1024\) window and a \(341\times682\) stride.
Class logits are averaged in overlapping regions.
Geometric transformations are applied jointly to images and labels,
with nearest-neighbor interpolation used for labels.

\paragraph{Compared methods and controlled comparisons.}
The segmentation experiments compare ToMe, PiToMe, and MPM with their
respective NSR-augmented versions and include the uncompressed Full-ViT
as a reference.
All models use the same DeiT-Tiny/16 segmentation architecture.
ToMe and PiToMe retain their token-size-based merging weights, while
MPM retains mutual-nearest-neighbor pairing and arithmetic-mean merging.
The merge locations and within-group weights are summarized in
Table~\ref{tab:token-nsr-integration}.

Graph experiments use GCN, GraphSAGE, and GIN, with NSR branches added
to their neighborhood aggregation layers.
Residual construction follows the member sets and aggregation
coefficients used by the corresponding backbone, as specified in
Table~\ref{tab:graph-nsr-integration}.
Each pair shares the same backbone configuration, task inputs, loss
function, and training protocol.

Before segmentation training, shared backbone and segmentation-head
parameters are copied across the compared models.
Random seeds are reset and data loaders are reconstructed for each
model, ensuring consistent data ordering and training budgets on the
same dataset.
Each Original--NSR pair uses the same compression-control configuration;
adding NSR preserves the original token-selection, matching, and
merging rules.

\paragraph{Semantic-segmentation training and evaluation.}
All segmentation models use an ImageNet-pretrained DeiT-Tiny/16 backbone.
After the representations are restored to the original patch grid,
LayerNorm and a linear classification layer produce class logits,
which are bilinearly upsampled to the input image resolution.
The backbone, segmentation head, and NSR parameters are trained jointly
under downstream segmentation supervision.

Training uses pixel-wise cross-entropy and
AdamW~\cite{loshchilov2019decoupled}, with an initial
learning rate of \(10^{-4}\) and weight decay of \(10^{-4}\).
The learning rate follows polynomial decay with power \(0.9\),
updated at each training iteration.
Configurations with warm-up first increase the learning rate linearly
to its initial value.
Training runs for \(100\) epochs on all three datasets.
CUDA training uses automatic mixed precision and gradient scaling.

For each semantic-segmentation configuration, we report
validation mIoU from the final checkpoint of a single training run.
The mIoU is computed from the confusion matrix accumulated
over the entire validation set and averaged over classes
with nonzero union.
The mIoU is computed from the confusion matrix accumulated over the
entire validation set and averaged over classes with nonzero union.
The main results and GFLOPs are reported in Table~\ref{tab:token-main},
with additional performance and parameter results in
Table~\ref{tab:token-additional}.

\paragraph{Heterophilic node classification.}
We use Roman-empire, Amazon-ratings, Minesweeper, Tolokers, and
Questions~\cite{platonov2023critical}.
Roman-empire is a word-dependency graph derived from an English Wikipedia
article, Amazon-ratings is a product co-purchasing network, and
Minesweeper is a synthetic graph based on the corresponding game.
Tolokers and Questions are interaction networks constructed from a
crowdsourcing platform and a question-answering service, respectively.
Dataset statistics are provided in
Table~\ref{tab:heterophily-statistics}.
We follow the ten fixed splits supplied with the benchmark.
Roman-empire and Amazon-ratings use classification accuracy, while the
remaining three datasets use ROC-AUC.

The models use four message-passing layers, a hidden dimension of
\(128\), and dropout of \(0.5\).
Training uses the full graph, with cross-entropy loss computed only on
training nodes.
Adam~\cite{kingma2015adam} uses an initial learning rate of
\(10^{-3}\) and weight decay of \(5\times10^{-4}\),
with the gradient norm clipped to \(1\).
Each official split uses random seed \(42\), and evaluation is performed
every ten epochs.

The learning-rate scheduler monitors the validation metric, with a
reduction factor of \(0.5\), a patience of \(50\) evaluations, and a
minimum learning rate of \(10^{-6}\).
Training runs for at most \(3{,}000\) epochs and stops after \(100\)
validation evaluations without improvement.
For each split, we report the test score of the checkpoint with the
best validation metric.
Final results are the mean and sample standard deviation over the
ten splits.
Results are reported in Table~\ref{tab:heterophily-main}.

\paragraph{Tree-NeighborsMatch.}
We follow the task construction of Alon and Yahav~\cite{alon2021bottleneck}
and use training accuracy to examine fitting ability on this controlled
matching task.
The root must predict the value stored at the leaf associated with its
query key.
Each leaf contains a key--value pair, and node inputs concatenate
one-hot encodings of keys and values.
We use complete binary trees of depths \(2\)--\(8\), with edges directed
from children to parents and self-loops included.
Examples are divided into \(80\%\) training and \(20\%\) test sets using
label-stratified splitting, without a separate validation set.

For a tree of depth \(d\), the model uses \(d+1\) message-passing layers
and a hidden dimension of \(32\).
Table~\ref{tab:tree-training-config} lists the generated sample counts,
message-passing depths, and nominal effective batch sizes.

\begin{table}[t]
    \centering
    \caption{
        Tree-NeighborsMatch configurations.
        Effective batches larger than \(256\) are implemented using
        gradient accumulation.
    }
    \label{tab:tree-training-config}
    \small
    \setlength{\tabcolsep}{6pt}
    \renewcommand{\arraystretch}{1.08}
    \begin{tabular}{@{}crcc@{}}
        \toprule
        Tree depth
        & \shortstack{Generated\\samples}
        & \shortstack{Message-passing\\layers}
        & \shortstack{Nominal effective\\batch size} \\
        \midrule
        2 & 96     & 3 & 64    \\
        3 & 8,000  & 4 & 64    \\
        4 & 16,000 & 5 & 1,024 \\
        5 & 32,000 & 6 & 1,024 \\
        6 & 32,000 & 7 & 1,024 \\
        7 & 32,000 & 8 & 2,048 \\
        8 & 32,000 & 9 & 2,048 \\
        \bottomrule
    \end{tabular}
\end{table}

Training uses cross-entropy and Adam with an initial learning rate of
\(10^{-3}\), without dropout or weight decay, for at most \(50{,}000\)
epochs.
Learning-rate scheduling and early stopping both monitor training
accuracy, with patience values of \(1{,}000\) and \(2{,}000\) epochs,
respectively.
All models use random seed \(11\) and are trained once at each tree depth.
We report the highest training accuracy attained during each run.
Figure~\ref{fig:tree-depth} shows training fit across tree depths.

\paragraph{ZINC.}
We use the \(12{,}000\)-graph ZINC subset~\cite{dwivedi2023benchmarking},
with fixed training, validation, and test splits containing
\(10{,}000\), \(1{,}000\), and \(1{,}000\) graphs, respectively.
The task is molecular property regression, evaluated by MAE, for which
lower values indicate better performance.
All models use atom types and graph connectivity without bond-type
features.

Atom types are mapped to \(128\)-dimensional learnable embeddings.
The models use four message-passing layers, a hidden dimension of
\(128\), and dropout of \(0.5\).
Training minimizes the \(L_1\) loss using Adam, with an initial learning
rate of \(10^{-3}\), weight decay of \(10^{-5}\), and a batch size of
\(128\).
The gradient norm is clipped to \(1\), and training runs for at most
\(400\) epochs.

Validation MAE determines both learning-rate scheduling and checkpoint
selection.
The scheduler uses a patience of \(20\) epochs, a reduction factor of
\(0.5\), and a minimum learning rate of \(10^{-5}\).
Early stopping uses a patience of \(100\) epochs.
After training, the checkpoint with the lowest validation MAE is
restored and evaluated once on the test set.
Table~\ref{tab:zinc-results} reports the mean and standard deviation of
test MAE over random seeds \(0\)--\(9\).

\paragraph{Operating points and configuration consistency.}
Token-merging experiments cover operating points from mild to
aggressive compression.
The computation-reduction factor is defined as
\(C=F_{\mathrm{Full}}/F_{\mathrm{Base}}\), where
\(F_{\mathrm{Full}}\) and \(F_{\mathrm{Base}}\) denote the GFLOPs of
the uncompressed Full-ViT and the original compressed baseline,
respectively.
Each method adjusts only its native compression controls to approach
the target computation budget.

ToMe varies the planned number of tokens merged per layer, PiToMe
varies the token-retention ratio, and MPM varies the Transformer blocks
at which merging is applied.
The original matching rules, weight computations, and merging
mechanisms remain unchanged.
Because MPM pairing depends on input content and model representations,
the compression factors achieved during calibration and final
evaluation may differ.
We retain the selected insertion schedule and report the computation
corresponding to final evaluation.

Each Original--NSR pair uses the same compression-control configuration:
ToMe shares the planned merge count and layer-wise merge configuration;
PiToMe shares the retention ratio, margin schedule, and matching mode
schedule; and MPM shares the insertion-block schedule.
NSR does not increase compression to compensate for its additional
computational overhead, and its total GFLOPs are reported separately.
The configurations for the three methods are provided in
Tables~\ref{tab:tome-compression-configurations},
\ref{tab:pitome-compression-configurations},
and~\ref{tab:mpm-compression-configurations}, respectively.
The complete configuration-search procedure is described in
Appendix~\ref{app:token-operating-points}.

\paragraph{Computational accounting.}
Computation is estimated analytically from the model's forward
structure, counting one multiply--accumulate as one FLOP.
The calculation includes patch embedding, attention, MLPs, the main
matching and merging operations, and the linear segmentation head.
For NSR models, it additionally includes residual encoding, gating,
feature decoding, spatial writeback, and spatial decoding.
Normalization, activation, indexing operations, and final upsampling
are excluded.
Parameter counts include all trainable parameters of the complete model.

For ToMe and PiToMe, layer-wise token counts are determined by the
compression configurations.
For MPM, computation is calculated from each sample's valid pair counts
and then averaged across samples, excluding the additional cost of
batch padding.
Cityscapes GFLOPs correspond to a single \(512\times1024\) input window,
and MPM pair statistics are collected from center windows of validation
images.
Total computation for full-image sliding-window inference also depends
on the number of windows.

\paragraph{Computing environment.}
Experiments were conducted on four compute instances, each equipped
with \(16\) CPU cores, \(128\,\mathrm{GB}\) of RAM, and one NVIDIA A100 GPU.
The implementation is based on PyTorch~\cite{paszke2019pytorch}.
Graph experiments use PyTorch Geometric~\cite{fey2019geometric} and torch-scatter, while vision
experiments use timm~\cite{wightman2019timm} and torchvision.

\subsection{Token-Merging Implementation}
\label{app:token-implementation}

All semantic-segmentation models use an ImageNet-pretrained
DeiT-Tiny/16 backbone.
For different input resolutions, the pretrained patch positional
embeddings are resized to the target grid using bicubic interpolation,
while the positional embedding of the classification token is retained.
After the backbone representations are restored to the original patch
grid, LayerNorm and a linear classification layer produce class logits,
which are bilinearly upsampled to the input image resolution.

ToMe, PiToMe, and MPM use the same NSR branch architecture.
For each method, residuals are constructed from the member groups and
weights produced by its merging operation.
Table~\ref{tab:token-nsr-integration} summarizes the merge locations
and within-group weights.

\begin{table}[t]
    \centering
    \caption{
        Integration of NSR with the three token-merging methods.
        Each method retains its merge location and weighting rule.
        Token size denotes the number of original patches represented
        by a token.
    }
    \label{tab:token-nsr-integration}
    \small
    \setlength{\tabcolsep}{5pt}
    \renewcommand{\arraystretch}{1.12}
    \begin{tabularx}{\linewidth}{
        @{}l
        >{\raggedright\arraybackslash}X
        >{\raggedright\arraybackslash}X@{}
    }
        \toprule
        Method & Merge location & Within-group weights \\
        \midrule
        ToMe
        & After the attention residual update and before the MLP
        & Normalized token-size weights \\
        PiToMe
        & After the attention residual update and before the MLP
        & Normalized token-size weights \\
        MPM
        & Before a designated Transformer block
        & Equal weights of \(1/2\) within each pair \\
        \bottomrule
    \end{tabularx}
\end{table}

ToMe and PiToMe use proportional attention based on token size.
MPM uses mutual-nearest-neighbor pairing and arithmetic-mean merging.
The classification token participates in backbone attention but is
excluded from matching and merging.
The operating-point configurations of all three methods are provided
in Appendix~\ref{app:token-operating-points}.

For a merge group \(\mathcal{G}\), NSR computes the merged
representation using the original within-group weights and extracts
the residual of each member relative to that representation.
For ToMe and PiToMe, all source tokens assigned to the same destination
are organized into a complete merge group.
Each valid MPM group contains two members.
Valid-member masks exclude padded entries introduced by batching,
and singleton groups produce no residual update.

The residual encoder consists of RMSNorm~\cite{zhang2019rmsnorm} followed by a bias-free
linear layer that maps the \(d\)-dimensional residual to a code of
dimension \(d_c=64\).
The gating network receives the concatenation of the normalized
member residual, member representation, and group representation.
It uses two linear layers with dimensions
\(3d\rightarrow32\rightarrow64\), a GELU activation between them,
and a sigmoid output to generate member-dependent, channel-wise
weights.
The member and group representations condition the gate, while the
information encoded by the residual encoder comes from the member
residual.

The same member codes support the feature and spatial residual paths.
The feature path aggregates the member codes using the original merge
weights and applies a bias-free \(64\rightarrow d\) linear decoder.
The resulting update is added to the current merged token, which
continues through the backbone.
Different merge locations have separate encoders, gating networks,
and feature decoders.

The spatial path maintains the correspondence between current tokens
and original patch positions.
At each merge event, a member code is written to the original
positions covered by that member and accumulated in a fixed
\(N_0\times64\) code tensor.
After the backbone computation, the merge events are reversed to
restore the original patch ordering.
A single shared bias-free linear decoder maps the accumulated codes
to position-specific updates, which are added to the restored patch
representations.
The spatial code tensor is initialized separately for each forward
pass and is not processed as an additional token sequence by the
Transformer.

Both the feature decoders and the final spatial decoder are
zero-initialized.
Consequently, with identical shared parameters, the NSR-augmented
model initially preserves the output of its corresponding merging
baseline.
The original and NSR-augmented models use the same matching rules,
weight-computation procedures, and predefined compression
configurations.
NSR constructs its residuals from the groups realized at each merge
event.

The feature-only and spatial-only ablations retain the corresponding
update path.
The post-merge-only control generates a feature update from the
merged group representation without using member residuals or
spatial correspondences.
The associated results are reported in
Appendix~\ref{app:additional}.

\subsection{Graph-Aggregation Implementation}
\label{app:graph-implementation}

The graph experiments use GCN, GIN, and GraphSAGE as backbones.
All models first map node inputs into the hidden representation
space and then apply multiple message-passing layers.
Each message-passing layer has a separate NSR branch, with no
parameter sharing across layers.

The graph branch applies its own bias-free \(d\rightarrow d\)
linear layer to the node representations participating in the
current aggregation and stacks the resulting messages row-wise
to form the member-message matrix \(M_v\) in
Section~\ref{sec:nsr_graph}.
The parameters of this linear layer are shared across members.
Using the member set and aggregation coefficients \(t_v\) of
the corresponding backbone, the branch applies the null-space
projection in Eq.~(\ref{eq:graph_nsr}) to \(M_v\) to extract
member-level residuals.
The projection properties are derived in
Appendix~\ref{app:nsr_theory}.
The implementation uses scatter operations grouped by destination
node and does not explicitly materialize local projection matrices.

Table~\ref{tab:graph-nsr-integration} specifies the members and
coefficients used by each backbone.

\begin{table}[t]
    \centering
    \caption{Member sets and aggregation coefficients used by the NSR graph branches.}
    \label{tab:graph-nsr-integration}
    \small
    \setlength{\tabcolsep}{5pt}
    \renewcommand{\arraystretch}{1.12}
    \begin{tabularx}{\linewidth}{
        @{}l
        >{\raggedright\arraybackslash}X
        >{\raggedright\arraybackslash}X@{}
    }
        \toprule
        Backbone & Members & Aggregation coefficients \\
        \midrule
        GCN
        & Incoming messages in the normalized backbone graph,
          including self-loops
        & Degree-normalized GCN coefficients \\
        GraphSAGE
        & Neighbor messages specified by the input edge list
        & Neighborhood-mean weights \\
        GIN
        & Neighbor messages and a separate root message
        & \(1\) for neighbors and \(1+\epsilon\) for the root \\
        \bottomrule
    \end{tabularx}
\end{table}

Each branch follows the incoming-edge and self-loop handling of its
corresponding backbone.
The separate root branch of GraphSAGE is retained in the backbone.
For GIN, the residual construction includes both the neighbor
aggregation terms and the separate root term.
The evaluated GIN configuration uses a fixed \(\epsilon=0\).

Member residuals are modulated by channel-wise gates.
The gating network concatenates the source-node representation,
destination-node representation, and member residual.
It consists of two linear layers with dimensions
\(3d\rightarrow d\rightarrow d\), a GELU activation, and a sigmoid
output.
The bias of its final linear layer is initialized to \(-2\).
The gated residuals are summed by destination node and scaled by
the inverse square root of the number of participating members.
The resulting node-level representation is passed through LayerNorm,
Dropout, and a \(d\rightarrow d\rightarrow d\) output network with
a GELU activation between its linear layers to produce the correction.

For GCN and GraphSAGE, the baseline layer applies graph convolution,
LayerNorm, GELU, and Dropout, followed by a residual addition to the
layer input.
The GIN convolution contains two successive
Linear--BatchNorm--ReLU stages.
Its output is passed through Dropout and added to the layer input.
In all cases, the NSR correction is added to the corresponding
baseline layer update, retaining the original backbone structure.

Node classification applies a linear classifier to the final node
representations.
Tree-NeighborsMatch reads out only the root representation.
ZINC uses global mean pooling to obtain a graph representation,
followed by a linear regression head.

% ============================================================
\section{Supplementary Derivations for NSR}
\label{app:nsr_theory}

This section explains why the residual operator in the main text
is a null-space projection and verifies that the lifts used for
graph aggregation and token merging satisfy the conditions of
the general construction.
All derivations concern the linear operator realized in a single
computation event.
When group assignments or weights depend on the input, they are
held fixed at the values realized for that event.

\subsection{Null-Space Projection Properties}
\label{app:null_projection}

Following the notation in Sec.~\ref{sec:nsr_complementarity},
let \(A\in\mathbb{R}^{k\times n}\) denote the current linear
operator, where \(n\) and \(k\) are the numbers of input and
output members, respectively.
Let \(U\in\mathbb{R}^{n\times k}\) be a linear lift from the
output representation space to the input member space,
satisfying \(AUA=A\)~\cite{benisrael2003generalized}.
Define the residual operator as \(P=I_n-UA\), where \(I_n\)
is the \(n\times n\) identity matrix.

We verify that \(P\) maps inputs into the null space of \(A\)
and leaves vectors in that null space unchanged.
Write
\(\ker(A)=\{q\in\mathbb{R}^{n}:Aq=0\}\).
Using \(AUA=A\), we obtain
\begin{equation}
    \begin{aligned}
        AP
        &=A-AUA=0,
        \\
        P^2
        &=I_n-2UA+U(AUA)
        =I_n-UA
        =P.
    \end{aligned}
    \label{eq:app_projection_proof}
\end{equation}

The identity \(AP=0\) implies that the range of \(P\) is
contained in \(\ker(A)\).
Conversely, for any \(q\in\ker(A)\), we have
\(Pq=q-UAq=q\), so \(\ker(A)\) is also contained in the
range of \(P\).
Together with \(P^2=P\), these relations establish that \(P\)
is a projection onto \(\ker(A)\), though not necessarily an
orthogonal projection.
Thus, the residual operator in the main text projects each
feature channel onto directions invisible to the original
operator and acts as the identity on those directions.

\subsection{Specialization to Graph Aggregation and Token Merging}
\label{app:nsr_specializations}

The two applications use different lifts: graph aggregation
lifts along the aggregation coefficient vector, whereas token
merging copies the merged representation back to the member
positions.
We verify below that both choices satisfy \(AUA=A\), allowing
direct application of
Proposition~\ref{prop:nsr_complementarity}.

\paragraph{Graph aggregation.}
\label{app:graph_specialization}

Fix a local aggregation at node \(v\).
Let \(K_v\geq1\) be the number of participating members and
let \(t_v\in\mathbb{R}^{K_v}\) be the nonzero vector of
aggregation coefficients.
This operation reduces the member messages to a single output
through a weighted sum, with operator
\(A_v=t_v^\top\in\mathbb{R}^{1\times K_v}\).
Following the construction in Sec.~\ref{sec:nsr_graph}, choose
the lift
\(U_v=t_v/(t_v^\top t_v)\in\mathbb{R}^{K_v\times1}\).

Since \(t_v^\top t_v>0\), this lift is well defined, and
\begin{equation}
    A_vU_v
    =
    \frac{t_v^\top t_v}{t_v^\top t_v}
    =1,
    \qquad
    A_vU_vA_v=A_v.
    \label{eq:app_graph_lift_condition}
\end{equation}

This choice therefore satisfies the condition of the general
construction, and the graph residual in the main text is
obtained by a null-space projection for the current local
aggregation operator.
The member set and coefficients are those of the aggregation
actually performed by the corresponding backbone.

\paragraph{Token merging.}
\label{app:token_specialization}

Consider a realized merge group \(\mathcal{G}\) containing
\(K\geq1\) tokens.
Let \(t\in\mathbb{R}^{K}\) be the merge weights, satisfying
\(t^\top\mathbf{1}_K=1\), where \(\mathbf{1}_K\) is the
\(K\)-dimensional all-ones column vector.
The merge operator is
\(A_{\mathcal{G}}=t^\top\in\mathbb{R}^{1\times K}\).
Following the copy-based unmerging used in
Sec.~\ref{sec:nsr_token}, choose
\(U_{\mathcal{G}}=\mathbf{1}_K\in\mathbb{R}^{K\times1}\),
which copies the same merged representation to every member
position in the group.

The weight normalization gives
\begin{equation}
    A_{\mathcal{G}}U_{\mathcal{G}}
    =
    t^\top\mathbf{1}_K
    =1,
    \qquad
    A_{\mathcal{G}}U_{\mathcal{G}}A_{\mathcal{G}}
    =A_{\mathcal{G}}.
    \label{eq:app_token_lift_condition}
\end{equation}

Thus, the copy-based lift also satisfies the condition of the
general construction, and the difference between the member
representations and the copied merged representation is the
corresponding null-space residual.
This conclusion depends only on the normalization of the
weights within the current group and does not require uniform
weights.

These verifications connect both residual constructions to the
unified formulation in the main text.
Their null-space properties and complementarity follow directly
from Proposition~\ref{prop:nsr_complementarity}.

% ============================================================
\section{Numerical Verification of NSR for Token Merging}
\label{app:nsr-numerical}

To verify that the implemented NSR construction is consistent with its
theoretical null-space definition, we examine both the null-space constraint
and the magnitude of the resulting residuals. For each realized token-merging
operation, let \(A\) denote the corresponding linear aggregation operator,
\(X\) the input token representations, and \(Z\) the residual defined in
Eq.~\eqref{eq:token_nsr}. We measure
\begin{equation}
    \epsilon_{\mathrm{null}}
    =
    \|AZ\|_{\infty},
    \qquad
    \rho_Z
    =
    \frac{\|Z\|_F}{\|X\|_F}.
    \label{eq:nsr-numerical-metrics}
\end{equation}
Here, \(\epsilon_{\mathrm{null}}\) measures the numerical violation of the
null-space constraint \(AZ=0\), while \(\rho_Z\) characterizes the residual
magnitude relative to the input representation.

\begin{figure*}[t]
    \centering
    \begin{minipage}[t]{0.32\textwidth}
        \centering
        \includegraphics[width=\linewidth]{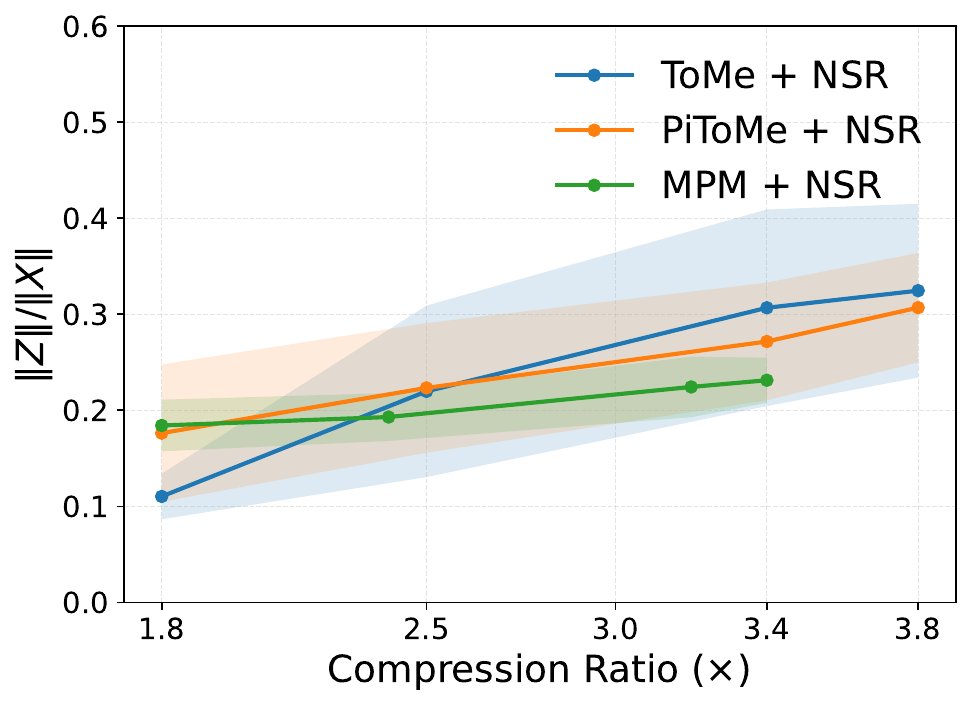}
        \vspace{-0.8em}
        \centerline{\small (a) ADE20K}
    \end{minipage}
    \hfill
    \begin{minipage}[t]{0.32\textwidth}
        \centering
        \includegraphics[width=\linewidth]{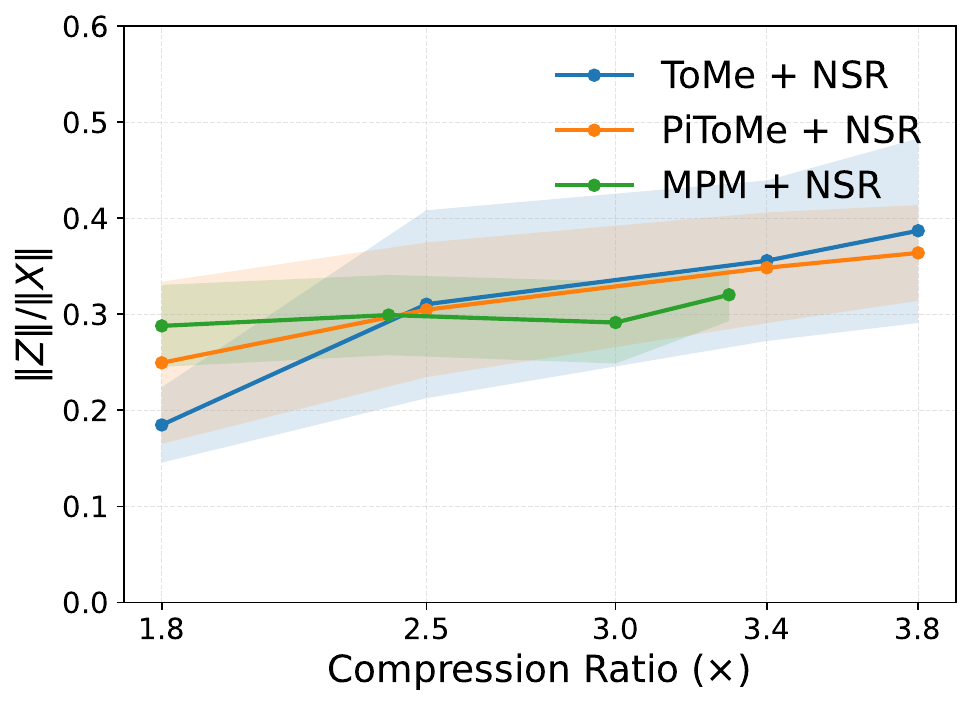}
        \vspace{-0.8em}
        \centerline{\small (b) Pascal VOC}
    \end{minipage}
    \hfill
    \begin{minipage}[t]{0.32\textwidth}
        \centering
        \includegraphics[width=\linewidth]{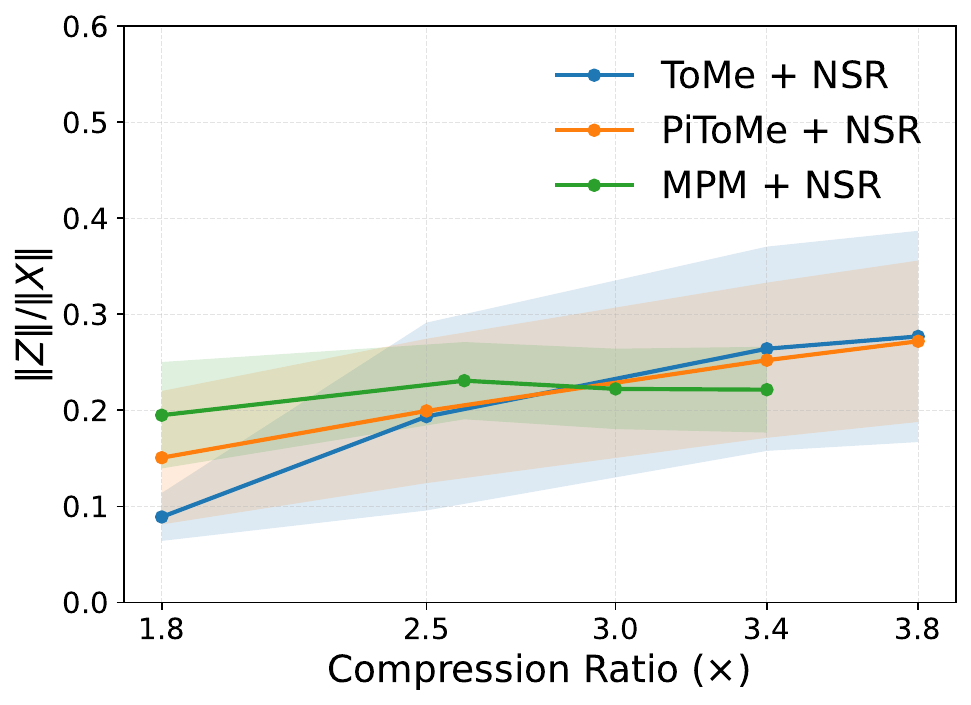}
        \vspace{-0.8em}
        \centerline{\small (c) Cityscapes}
    \end{minipage}
    \caption{Relative null-space residual magnitude
    \(\rho_Z=\|Z\|_F/\|X\|_F\) for the trained \(+\method\) token-merging
    models on (a) ADE20K, (b) Pascal VOC, and (c) Cityscapes. For each compression setting, the solid line reports the mean across merging blocks in which NSR is constructed, and the shaded region shows the corresponding mean $\pm$ one standard deviation.
}
    \label{fig:nsr-residual-magnitude}
\end{figure*}

Figure~\ref{fig:nsr-residual-magnitude} shows that the relative residual
magnitude generally increases with stronger compression for ToMe and PiToMe
across the three datasets, whereas MPM varies more mildly and is occasionally
non-monotonic. These results show that the null-space residual is numerically
non-negligible and that its magnitude depends on both the merging operator
and the compression strength.

\begin{table}[t]
\centering
\caption{Maximum null-space violation \(\epsilon_{\mathrm{null}}=\|AZ\|_\infty\)
for the trained \(+\method\) models. Each entry is the maximum observed value
over all evaluated compression settings and merging blocks.}
\label{tab:nsr-null-violation}
\small
\setlength{\tabcolsep}{5pt}
\begin{tabular}{lccc}
\toprule
Dataset & ToMe & PiToMe & MPM \\
\midrule
ADE20K
& \(9.02\times10^{-7}\)
& \(9.54\times10^{-7}\)
& \(9.54\times10^{-7}\) \\
Pascal VOC
& \(7.38\times10^{-7}\)
& \(6.28\times10^{-7}\)
& \(4.77\times10^{-7}\) \\
Cityscapes
& \(1.76\times10^{-6}\)
& \(9.54\times10^{-7}\)
& \(4.77\times10^{-7}\) \\
\bottomrule
\end{tabular}
\end{table}

Table~\ref{tab:nsr-null-violation} reports the complementary constraint check.
Across all datasets and merging methods, the maximum observed violation is on
the order of \(10^{-6}\) or below, with an overall maximum of
\(1.76\times10^{-6}\). Thus, the constructed residuals satisfy
\(AZ\approx0\) to a small numerical tolerance across the evaluated settings.
Together with Figure~\ref{fig:nsr-residual-magnitude}, these results verify
that NSR constructs residual components with non-negligible magnitude while
remaining effectively invisible to the original merging operator.

\section{Additional Performance and Parameter Results for Token Merging}
\label{app:token-additional-results}

Table~\ref{tab:token-additional} supplements the main token-merging
results with pixel accuracy (Pixel Acc.), model parameter counts
(Params), and Compression Gap Closure for ToMe, PiToMe, and MPM
across different compression settings on PASCAL VOC, Cityscapes,
and ADE20K.
These results provide an additional view of the performance recovery
achieved by NSR and its associated parameter overhead.

In terms of model size, all Original Token Merging models have the
same number of parameters as the corresponding uncompressed Full-ViT,
consistent with the design of these methods, which introduce no
additional learnable parameters.
This agreement also provides an auxiliary check on the correct
implementation of the Token Merging baselines from the perspective
of model size.
After introducing NSR, the relative parameter increases are
\(3.56\%\)--\(10.23\%\), \(1.78\%\)--\(9.61\%\), and
\(4.25\%\)--\(9.90\%\) on PASCAL VOC, Cityscapes, and ADE20K,
respectively, indicating that NSR introduces only limited additional
parameter overhead.
In particular, MPM incurs lower overhead in some settings; for example,
at the \(1.8\times\) setting on Cityscapes, its parameter count
increases by only \(1.78\%\).

In terms of segmentation performance, NSR achieves higher Pixel Acc.
than the corresponding compressed baseline in 35 of the 36 evaluated
settings, with larger observed recovery under stronger compression.
For example, for ToMe at the \(3.8\times\) setting on PASCAL VOC
and ADE20K, NSR increases the parameter count by only \(8.56\%\)
and \(9.90\%\), while improving Pixel Acc. by \(0.1376\) and
\(0.2543\), respectively.
These results further show that NSR can effectively mitigate the
segmentation-performance degradation caused by Token Merging under
strong compression while introducing limited additional parameter
overhead.

To complement the absolute mIoU gains in Table~\ref{tab:token-main},
we report Compression Gap Closure, which measures the fraction
of the performance gap between the original compressed baseline
and Full-ViT that is closed by NSR.
For the evaluated configurations, where
\(\mathrm{mIoU}_{\mathrm{Full}}
> \mathrm{mIoU}_{\mathrm{Base}}\), it is defined as
\begin{equation}
\label{eq:token-gap-closure}
\mathrm{Gap\ Closure}
=
\frac{
\mathrm{mIoU}_{\mathrm{NSR}}
-
\mathrm{mIoU}_{\mathrm{Base}}
}{
\mathrm{mIoU}_{\mathrm{Full}}
-
\mathrm{mIoU}_{\mathrm{Base}}
}
\times 100\%,
\end{equation}
where \(\mathrm{mIoU}_{\mathrm{Full}}\),
\(\mathrm{mIoU}_{\mathrm{Base}}\), and
\(\mathrm{mIoU}_{\mathrm{NSR}}\) denote the mIoU of Full-ViT,
the corresponding original compressed baseline, and its
NSR-augmented counterpart, respectively.
A Gap Closure of \(100\%\) indicates that the NSR-augmented model
matches Full-ViT; values above \(100\%\) indicate that it surpasses
Full-ViT, while negative values indicate a decrease relative to
the compressed baseline.

The final column of Table~\ref{tab:token-additional} reports
Gap Closure for all \(36\) compression configurations.
NSR closes a substantial fraction of the performance gap across
multiple merging methods and datasets.
For example, ToMe at the \(3.4\times\) operating point on PASCAL VOC
gains \(31.51\) mIoU points, closing \(67.7\%\) of the gap to Full-ViT.
On ADE20K, PiToMe at \(1.8\times\) reaches a Gap Closure of
\(127.1\%\), consistent with its mIoU exceeding Full-ViT.

Because the metric normalizes by the Full-ViT--baseline performance
gap, it is sensitive to small reference gaps.
For example, ToMe at \(1.8\times\) on Cityscapes has a Gap Closure
of \(-33.3\%\), although its absolute mIoU change is only \(-0.04\).
Gap Closure is therefore interpreted together with the absolute
mIoU values and gains in Table~\ref{tab:token-main}.

\begin{table*}[t]
\centering
\caption{
Additional semantic-segmentation results for token merging
with and without NSR.
WP denotes the computation-reduction operating point of the
original compressed baseline relative to Full-ViT.
Pixel Acc. and Params are reported separately for Original
and +NSR.
Within each Original--NSR pair, the higher Pixel Acc. is shown
in bold.
Gap Closure is computed from unrounded mIoU values according
to Eq.~\eqref{eq:token-gap-closure} and reported to one decimal place.
}
\label{tab:token-additional}
\small
\setlength{\tabcolsep}{4pt}
\renewcommand{\arraystretch}{0.96}
\begin{tabular}{llc cc cc c}
\toprule
& & & \multicolumn{2}{c}{Pixel Acc. $\uparrow$}
& \multicolumn{2}{c}{Params}
& \\
\cmidrule(lr){4-5}\cmidrule(lr){6-7}
Dataset & Method & WP
& Original & +NSR
& Original & +NSR
& Gap Closure $\uparrow$ \\
\midrule

\multirow{13}{*}{VOC}
& Full-ViT & \(1.0\times\)
& 0.9062 & --
& 5,528,853 & --
& -- \\
\cmidrule(lr){2-8}

& \multirow{4}{*}{ToMe}
& \(1.8\times\)
& 0.8988 & \textbf{0.9030}
& 5,528,853 & 6,094,485
& \(69.7\%\) \\
& & \(2.5\times\)
& 0.8248 & \textbf{0.8873}
& 5,528,853 & 6,094,485
& \(73.8\%\) \\
& & \(3.4\times\)
& 0.7131 & \textbf{0.8607}
& 5,528,853 & 6,048,373
& \(67.7\%\) \\
& & \(3.8\times\)
& 0.7111 & \textbf{0.8487}
& 5,528,853 & 6,002,261
& \(61.1\%\) \\
\cmidrule(lr){2-8}

& \multirow{4}{*}{PiToMe}
& \(1.8\times\)
& 0.8934 & \textbf{0.8978}
& 5,528,853 & 6,094,485
& \(28.5\%\) \\
& & \(2.5\times\)
& 0.8722 & \textbf{0.8869}
& 5,528,853 & 6,094,485
& \(37.6\%\) \\
& & \(3.4\times\)
& 0.8241 & \textbf{0.8696}
& 5,528,853 & 6,094,485
& \(48.8\%\) \\
& & \(3.8\times\)
& 0.7978 & \textbf{0.8595}
& 5,528,853 & 6,094,485
& \(48.5\%\) \\
\cmidrule(lr){2-8}

& \multirow{4}{*}{MPM}
& \(1.8\times\)
& 0.8969 & \textbf{0.8991}
& 5,528,853 & 5,725,589
& \(22.7\%\) \\
& & \(2.4\times\)
& 0.8866 & \textbf{0.8893}
& 5,528,853 & 5,817,813
& \(14.5\%\) \\
& & \(3.0\times\)
& 0.8735 & \textbf{0.8827}
& 5,528,853 & 6,002,261
& \(27.2\%\) \\
& & \(3.3\times\)
& 0.8601 & \textbf{0.8753}
& 5,528,853 & 6,094,485
& \(26.0\%\) \\

\midrule

\multirow{13}{*}{Cityscapes}
& Full-ViT & \(1.0\times\)
& 0.9517 & --
& 5,884,051 & --
& -- \\
\cmidrule(lr){2-8}

& \multirow{4}{*}{ToMe}
& \(1.8\times\)
& \textbf{0.9508} & 0.9507
& 5,884,051 & 6,449,683
& \(-33.3\%\) \\
& & \(2.5\times\)
& 0.9390 & \textbf{0.9497}
& 5,884,051 & 6,449,683
& \(79.3\%\) \\
& & \(3.4\times\)
& 0.9046 & \textbf{0.9463}
& 5,884,051 & 6,449,683
& \(86.4\%\) \\
& & \(3.8\times\)
& 0.8860 & \textbf{0.9446}
& 5,884,051 & 6,449,683
& \(85.0\%\) \\
\cmidrule(lr){2-8}

& \multirow{4}{*}{PiToMe}
& \(1.8\times\)
& 0.9489 & \textbf{0.9495}
& 5,884,051 & 6,449,683
& \(4.7\%\) \\
& & \(2.5\times\)
& 0.9454 & \textbf{0.9492}
& 5,884,051 & 6,449,683
& \(80.5\%\) \\
& & \(3.4\times\)
& 0.9375 & \textbf{0.9459}
& 5,884,051 & 6,449,683
& \(53.3\%\) \\
& & \(3.8\times\)
& 0.9333 & \textbf{0.9448}
& 5,884,051 & 6,449,683
& \(62.2\%\) \\
\cmidrule(lr){2-8}

& \multirow{4}{*}{MPM}
& \(1.8\times\)
& 0.9493 & \textbf{0.9501}
& 5,884,051 & 5,988,563
& \(49.5\%\) \\
& & \(2.6\times\)
& 0.9443 & \textbf{0.9476}
& 5,884,051 & 6,219,123
& \(47.6\%\) \\
& & \(3.0\times\)
& 0.9429 & \textbf{0.9469}
& 5,884,051 & 6,265,235
& \(25.8\%\) \\
& & \(3.4\times\)
& 0.9379 & \textbf{0.9453}
& 5,884,051 & 6,449,683
& \(42.5\%\) \\

\midrule

\multirow{13}{*}{ADE20K}
& Full-ViT & \(1.0\times\)
& 0.7708 & --
& 5,712,726 & --
& -- \\
\cmidrule(lr){2-8}

& \multirow{4}{*}{ToMe}
& \(1.8\times\)
& 0.7687 & \textbf{0.7693}
& 5,712,726 & 6,278,358
& \(81.4\%\) \\
& & \(2.5\times\)
& 0.7272 & \textbf{0.7661}
& 5,712,726 & 6,278,358
& \(91.5\%\) \\
& & \(3.4\times\)
& 0.5661 & \textbf{0.7548}
& 5,712,726 & 6,278,358
& \(82.5\%\) \\

% TODO: Verify ADE20K--ToMe at 3.8x against the unrounded source results.
% The original 84.5% is retained here without an unverified correction.
% It is inconsistent with the displayed Full/Base/NSR mIoU values
% of 37.60/12.44/33.74, even after accounting for rounding.
& & \(3.8\times\)
& 0.4963 & \textbf{0.7506}
& 5,712,726 & 6,278,358
& \(84.7\%\) \\
\cmidrule(lr){2-8}

& \multirow{4}{*}{PiToMe}
& \(1.8\times\)
& 0.7662 & \textbf{0.7704}
& 5,712,726 & 6,278,358
& \(127.1\%\) \\
& & \(2.5\times\)
& 0.7545 & \textbf{0.7636}
& 5,712,726 & 6,278,358
& \(41.4\%\) \\
& & \(3.4\times\)
& 0.7332 & \textbf{0.7601}
& 5,712,726 & 6,278,358
& \(68.6\%\) \\
& & \(3.8\times\)
& 0.7160 & \textbf{0.7532}
& 5,712,726 & 6,278,358
& \(54.5\%\) \\
\cmidrule(lr){2-8}

& \multirow{4}{*}{MPM}
& \(1.8\times\)
& 0.7685 & \textbf{0.7713}
& 5,712,726 & 5,955,574
& \(-7.0\%\) \\
& & \(2.4\times\)
& 0.7653 & \textbf{0.7659}
& 5,712,726 & 6,001,686
& \(33.6\%\) \\
& & \(3.2\times\)
& 0.7486 & \textbf{0.7622}
& 5,712,726 & 6,186,134
& \(53.7\%\) \\
& & \(3.4\times\)
& 0.7432 & \textbf{0.7583}
& 5,712,726 & 6,278,358
& \(54.8\%\) \\

\bottomrule
\end{tabular}
\end{table*}

\section{Label-Based Null-Space Diagnostic:
Definition and Supplementary Analysis}
\label{app:trnsil}

This appendix provides the full definition of the diagnostic
score used in the main text, the binning procedure, and
supplementary results on accuracy gains.

For a local aggregation event \(u\), let
\(S_u \in \mathbb{R}^{k_u \times C}\) contain the one-hot
label encodings of the participating members, where \(k_u\)
is the number of members and \(C\) is the number of classes.
Let \(t_u \in \mathbb{R}^{k_u}\) be the nonzero aggregation
coefficient vector actually used by the corresponding backbone.
Following the graph-aggregation residual projection in
Eq.~\eqref{eq:graph_nsr}, we define
\begin{equation}
    P_u
    =
    I_{k_u}
    -
    \frac{t_u t_u^\top}{t_u^\top t_u},
    \qquad
    r_u
    =
    \frac{\lVert P_u S_u \rVert_F^2}
         {\lVert S_u \rVert_F^2 + \epsilon},
    \label{eq:local-trnsil}
\end{equation}
where \(\epsilon > 0\) is a numerical stabilizer.
Since \(P_u\) is the orthogonal projection onto
\(\ker(t_u^\top)\), it satisfies
\(t_u^\top P_u S_u = 0\).
Thus, \(r_u\) measures the normalized projection energy of
the label encodings in the null space of this local aggregation.
Both the member set and the aggregation coefficients follow
the aggregation definition of the corresponding backbone.
This score operates on label encodings and does not directly
measure task-relevant information loss in the model's hidden
features.

For a model with \(L\) message-passing layers, let
\(\mathcal{T}_v^{(L)}\) denote the set of local aggregation
events in the unfolded computation tree of target node \(v\).
We define the node-level diagnostic score as
\begin{equation}
    R_v^{(L)}
    =
    \frac{
        \sum_{u \in \mathcal{T}_v^{(L)}} k_u r_u
    }{
        \sum_{u \in \mathcal{T}_v^{(L)}} k_u
    },
    \label{eq:trnsil}
\end{equation}
where \(u\) denotes a local aggregation event in the computation
tree and \(k_u\) is the number of members in that event.
This weighted average summarizes the local diagnostic scores
within the target node's computation tree.
All scores are computed from ground-truth labels after training
and are not used for training, hyperparameter selection,
or model selection.

On Roman-empire, we compute \(R_v^{(L)}\) for the test nodes
separately for each official split and each of the GCN, GIN,
and GraphSAGE backbones.
Bin boundaries are determined by score deciles, with nodes
having identical scores kept in the same bin.
The Original and \(+\)NSR models are compared on the same
nodes within each bin.
Bins are constructed independently within each split, and
cross-split statistics are aggregated by bin index, ordered
from low to high scores.

Figure~\ref{fig:trnsil-accuracy} in the main text reports
the mean accuracy in each bin.
Figure~\ref{fig:trnsil-gain} further shows the accuracy gain
within the same bins.
For bin \(b\) in split \(s\), we define
\begin{equation}
    \Delta\mathrm{Acc}_{s,b}
    =
    \mathrm{Acc}_{+\mathrm{NSR},s,b}
    -
    \mathrm{Acc}_{\mathrm{Original},s,b},
    \label{eq:trnsil-gain}
\end{equation}
and average \(\Delta\mathrm{Acc}_{s,b}\) over the ten official
splits.
For all three backbones, the mean gain exhibits an overall
increasing trend with the score-bin index, with local
fluctuations.
This figure presents the differences underlying the accuracy
results in the main text and provides a complementary view
of how NSR gains are distributed across score bins in this
experiment.

\begin{figure*}[t]
    \centering
    \includegraphics[width=0.7\linewidth]{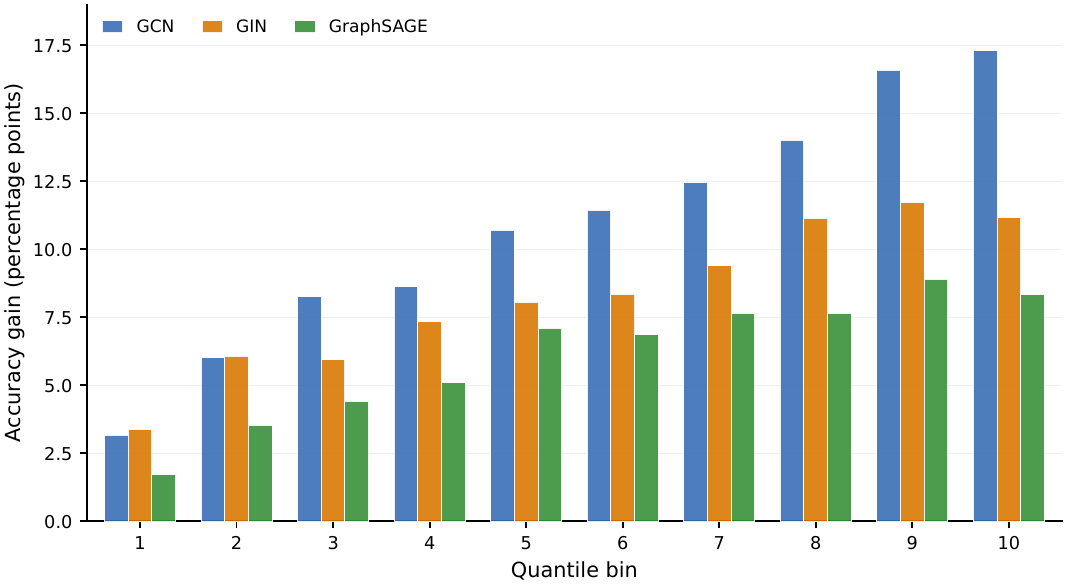}
    \caption{
        Mean accuracy gain from NSR across diagnostic-score bins
        on Roman-empire.
        Bins are the same as those in
        Figure~\ref{fig:trnsil-accuracy}.
        Within each official split, the gain is computed as
        the accuracy of the NSR-augmented model minus that
        of the Original model.
        The figure reports the mean over the ten official splits.
    }
    \label{fig:trnsil-gain}
\end{figure*}

% ============================================================

% ============================================================

\section{Operating-Point Construction}
\label{app:token-operating-points}

We define the computation-reduction factor as
\begin{equation}
    C =
    \frac{F_{\mathrm{Full}}}
         {F_{\mathrm{compressed}}},
    \label{eq:token-compression-factor}
\end{equation}
where $F_{\mathrm{Full}}$ and $F_{\mathrm{compressed}}$ denote
the GFLOPs of the uncompressed Full-ViT and the corresponding
token-merging baseline, respectively. For the token-merging experiments, the reported WP values
serve as nominal labels for configurations constructed using
this ratio; the achieved computation-reduction factors,
which need not equal these labels, are reported in the
Achieved columns of the configuration tables below.
For a target reduction factor $C_{\mathrm{target}}$, each method
searches only within its native compression-control space to
match the target computation budget as closely as possible,
without modifying its original matching or merging mechanism.

\paragraph{ToMe.}
ToMe controls compression through the planned number of tokens
$r$ merged per layer.
For each target computation budget, we search over all valid
integer values of $r$ and select
\begin{equation}
    r^{*} =
    \underset{r}{\arg\min}
    \left|
        \frac{F_{\mathrm{Full}}}
             {F_{\mathrm{ToMe}}(r)}
        - C_{\mathrm{target}}
    \right|,
    \label{eq:tome-operating-point}
\end{equation}
where $F_{\mathrm{ToMe}}(r)$ denotes the GFLOPs of the ToMe
baseline with compression parameter $r$.
The actual number of merged tokens $r_l$ at layer $l$ follows
the original ToMe merging rules and is constrained by the
number of remaining tokens.

\paragraph{PiToMe.}
PiToMe controls compression through the retention ratio $q$.
The number of tokens merged at Transformer block $l$ is
\begin{equation}
    r_l =
    \left\lfloor T_l(1-q) \right\rfloor,
    \label{eq:pitome-merge-count}
\end{equation}
where $T_l$ denotes the number of patch tokens before merging
at that block, excluding the CLS token.
We deterministically search $q$ over $[0.5,0.999]$ with a
step size of $0.001$, while keeping the original energy-based
token selection, margin schedule, and matching mode schedule
unchanged.

\paragraph{MPM.}
Since the actual number of merged tokens in MPM depends on
the input features, we construct different operating points
by searching the insertion-block schedule without modifying
its mutual-nearest-neighbor matching or arithmetic-mean
merging mechanism.
We use 32 and 200 images from the training split for coarse
search and refinement, respectively.
Candidate schedules are generated by adding, removing, or
moving insertion blocks, and the final schedule is selected
according to its distance from the target computation-reduction
factor, without using validation mIoU.

Because MPM matching depends on the input content and model
representations, the compression factors estimated during
calibration and achieved during final validation may differ.
We report both values and keep the selected insertion schedule
fixed during final evaluation.

\paragraph{Configuration consistency.}
For each operator and operating point, the baseline and its
NSR-augmented counterpart share the same compression-control
configuration.
ToMe shares $r$ and the layer-wise merge schedule;
PiToMe shares the retention ratio, layer-wise merge schedule,
margin schedule, and matching mode schedule;
and MPM shares the insertion-block schedule.
NSR does not search for a more aggressive compression
configuration to compensate for its additional computational
overhead.

All Transformer block indices reported below are zero-based.

% ============================================================
% ToMe Compression Configurations
% ============================================================

\subsection{ToMe Compression Configurations}
\label{app:tome-configurations}

Table~\ref{tab:tome-compression-configurations} reports
the selected $r$, achieved computation-reduction factor,
and GFLOPs for ToMe on the three datasets.
Since the actual layer-wise merge count is constrained by
the number of remaining tokens, the selected $r$ may not
be fully applied at every layer.
The corresponding layer-wise compression distributions are
visualized in Fig.~\ref{fig:layerwise-token-merging}.

\begin{table}[t]
\centering
\caption{ToMe compression configurations on the three semantic
segmentation datasets. Target and Achieved denote the target
and realized computation-reduction factors, respectively.
Selected $r$ is the planned number of tokens merged per layer.
GFLOPs denotes the computation of the corresponding baseline.}
\label{tab:tome-compression-configurations}
\small
\setlength{\tabcolsep}{7pt}
\renewcommand{\arraystretch}{0.9}
\begin{tabular}{lcccc}
\toprule
Dataset & Target & Selected $r$ & Achieved & GFLOPs \\
\midrule
\multirow{4}{*}{VOC2012}
& $1.8\times$ & 14  & $1.797\times$ & 0.6980 \\
& $2.5\times$ & 21  & $2.550\times$ & 0.4918 \\
& $3.4\times$ & 29  & $3.367\times$ & 0.3725 \\
& $3.8\times$ & 33  & $3.768\times$ & 0.3329 \\
\midrule
\multirow{4}{*}{ADE20K}
& $1.8\times$ & 66  & $1.799\times$ & 5.8169 \\
& $2.5\times$ & 98  & $2.490\times$ & 4.2015 \\
& $3.4\times$ & 140 & $3.392\times$ & 3.0850 \\
& $3.8\times$ & 161 & $3.805\times$ & 2.7500 \\
\midrule
\multirow{4}{*}{Cityscapes}
& $1.8\times$ & 124 & $1.798\times$ & 16.9827 \\
& $2.5\times$ & 187 & $2.505\times$ & 12.1911 \\
& $3.4\times$ & 266 & $3.399\times$ & 8.9841 \\
& $3.8\times$ & 304 & $3.799\times$ & 8.0367 \\
\bottomrule
\end{tabular}
\end{table}

% ============================================================
% PiToMe Compression Configurations
% ============================================================

\subsection{PiToMe Compression Configurations}
\label{app:pitome-configurations}

PiToMe uses the same margin schedule across all datasets
and operating points:
\begin{equation}
\boldsymbol{m} = [0.75,\ 0.6875,\ 0.625,\ 0.5625,\ 0.5,\ 0.4375,\ 0.375,\ 0.3125,\ 0.25,\ 0.1875,\ 0.125,\ 0.0625].
\label{eq:pitome-margin-schedule}
\end{equation}
Here, $\boldsymbol{m}$ specifies the margin parameter of each
Transformer block in execution order.
The first six blocks use BSM mode, while the remaining six
blocks use standard mode.
The CLS token never participates in matching or merging.

Table~\ref{tab:pitome-compression-configurations} summarizes
the retention ratio, achieved computation-reduction factor,
and GFLOPs for each operating point.
The corresponding layer-wise compression distributions are
shown in Fig.~\ref{fig:layerwise-token-merging}.

\begin{table}[t]
\centering
\caption{PiToMe compression configurations on the three semantic
segmentation datasets. $q$ denotes the retention ratio.
All operating points use the same margin and matching mode
schedules. GFLOPs denotes the computation of the corresponding
baseline.}
\label{tab:pitome-compression-configurations}
\small
\setlength{\tabcolsep}{7pt}
\renewcommand{\arraystretch}{0.96}
\begin{tabular}{lcccc}
\toprule
Dataset & Target & Retention ratio $q$ & Achieved & GFLOPs \\
\midrule
\multirow{4}{*}{VOC2012}
& $1.8\times$ & 0.895 & $1.796\times$ & 0.6983 \\
& $2.5\times$ & 0.829 & $2.503\times$ & 0.5010 \\
& $3.4\times$ & 0.754 & $3.407\times$ & 0.3681 \\
& $3.8\times$ & 0.725 & $3.783\times$ & 0.3316 \\
\midrule
\multirow{4}{*}{ADE20K}
& $1.8\times$ & 0.908 & $1.804\times$ & 5.8011 \\
& $2.5\times$ & 0.849 & $2.503\times$ & 4.1799 \\
& $3.4\times$ & 0.781 & $3.401\times$ & 3.0765 \\
& $3.8\times$ & 0.752 & $3.795\times$ & 2.7572 \\
\midrule
\multirow{4}{*}{Cityscapes}
& $1.8\times$ & 0.913 & $1.806\times$ & 16.9101 \\
& $2.5\times$ & 0.859 & $2.500\times$ & 12.2127 \\
& $3.4\times$ & 0.794 & $3.407\times$ & 8.9615 \\
& $3.8\times$ & 0.767 & $3.797\times$ & 8.0407 \\
\bottomrule
\end{tabular}
\end{table}

% ============================================================
% MPM Compression Configurations
% ============================================================

\subsection{MPM Compression Configurations}
\label{app:mpm-configurations}

MPM constructs different computation operating points by
adjusting its insertion-block schedule.
Table~\ref{tab:mpm-compression-configurations} summarizes
the target reduction factors, selected insertion blocks,
and compression factors achieved during calibration and
final validation.

\begin{table}[t]
\centering
\caption{MPM compression configurations on the three semantic
segmentation datasets. Target denotes the search target,
while Reported point is the operating-point label used in
the final experiments. Search and Achieved denote the
computation-reduction factors obtained during calibration
and final validation, respectively. GFLOPs denotes the
computation of the final baseline. A range $i$--$j$ in
Insertion blocks includes all consecutive block indices
from $i$ to $j$, inclusive.}
\label{tab:mpm-compression-configurations}
\small
\setlength{\tabcolsep}{3pt}
\renewcommand{\arraystretch}{0.7}
\resizebox{0.7\linewidth}{!}{%
\begin{tabular}{lcclccc}
\toprule
Dataset
& Target
& \shortstack{Reported\\point}
& Insertion blocks
& Search
& Achieved
& GFLOPs \\
\midrule
\multirow{4}{*}{VOC2012}
& $1.8\times$
& $1.8\times$
& [0,4,5,9]
& $1.801\times$
& $1.793\times$
& 0.6994 \\
& $2.5\times$
& $2.4\times$
& [0--1,3,5--6,10]
& $2.494\times$
& $2.449\times$
& 0.5122 \\
& $3.0\times$
& $3.0\times$
& [0--2,4--6,8--11]
& $3.004\times$
& $2.926\times$
& 0.4286 \\
& $3.8\times$
& $3.3\times$
& [0--11]
& $3.438\times$
& $3.332\times$
& 0.3764 \\
\midrule
\multirow{4}{*}{ADE20K}
& $1.8\times$
& $1.8\times$
& [2,4,7,9,11]
& $1.803\times$
& $1.803\times$
& 5.8041 \\
& $2.5\times$
& $2.4\times$
& [0--1,4,7,9,11]
& $2.507\times$
& $2.355\times$
& 4.4431 \\
& $3.4\times$
& $3.2\times$
& [0--4,7--11]
& $3.412\times$
& $3.167\times$
& 3.3035 \\
& $3.8\times$
& $3.4\times$
& [0--11]
& $3.691\times$
& $3.425\times$
& 3.0553 \\
\midrule
\multirow{4}{*}{Cityscapes}
& $1.8\times$
& $1.8\times$
& [0,2]
& $1.800\times$
& $1.816\times$
& 16.8170 \\
& $2.5\times$
& $2.6\times$
& [1--6,10]
& $2.487\times$
& $2.562\times$
& 11.9181 \\
& $3.0\times$
& $3.0\times$
& [0--2,4--7,10]
& $2.966\times$
& $3.033\times$
& 10.0667 \\
& $3.4\times$
& $3.4\times$
& [0--11]
& $3.271\times$
& $3.359\times$
& 9.0903 \\
\bottomrule
\end{tabular}%
}
\end{table}

As shown in Table~\ref{tab:mpm-compression-configurations},
the final computation-reduction factor of MPM may differ
from its calibration estimate.
For example, the $2.5\times$ target on ADE20K achieves
$2.507\times$ during calibration but $2.355\times$ during
final validation.
This difference is associated with the dependence of
content-adaptive matching on the input and model
representations, while the selected insertion schedule
remains unchanged during final evaluation.

For the $3.8\times$ targets on VOC2012 and ADE20K, and the
$3.4\times$ target on Cityscapes, all 12 Transformer blocks
are used as MPM insertion points.
These configurations exhaust the available insertion
positions under the current experimental protocol.
We therefore report their achieved compression factors
rather than treating the original targets as exactly matched.

% ============================================================
% Layer-wise Token Merging Distributions
% ============================================================

\subsection{Layer-wise Token Merging Distributions}
\label{app:layerwise-token-merging}

To further illustrate the layer-wise compression behavior
across operators and operating points,
Fig.~\ref{fig:layerwise-token-merging} presents the normalized
token-merging distributions of ToMe and PiToMe on the three
datasets.
The normalized number of merged tokens at Transformer
block $l$ is defined as
\begin{equation}
    M_l = \frac{r_l}{T_0},
    \label{eq:normalized-merge-count}
\end{equation}
where $r_l$ denotes the actual number of tokens merged
at block $l$, and $T_0$ denotes the initial number of
patch tokens entering the ViT, excluding the CLS token.
This normalization enables comparison of layer-wise
merging quantities across different input resolutions
on a common scale.

\begin{figure*}[t]
    \centering
    \includegraphics[width=0.98\textwidth]
    {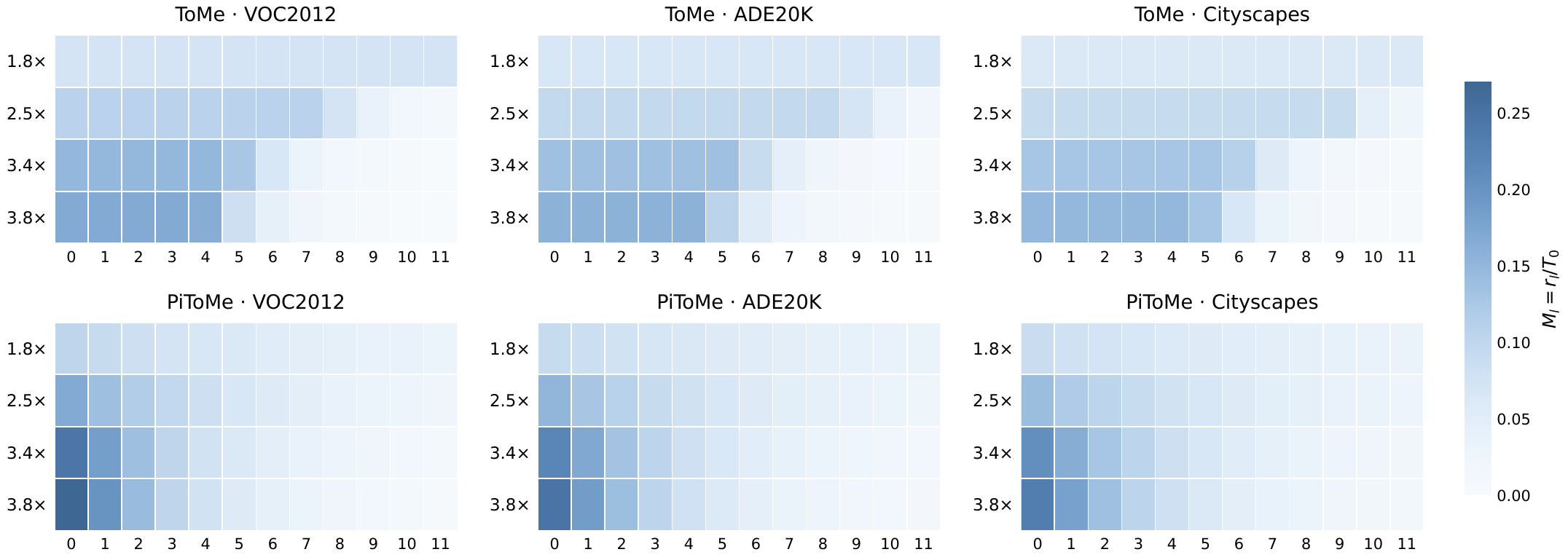}
    \caption{Layer-wise token-merging distributions of ToMe
    and PiToMe on VOC2012, ADE20K, and Cityscapes.
    The top and bottom rows correspond to ToMe and PiToMe,
    respectively, while the three columns correspond to
    VOC2012, ADE20K, and Cityscapes from left to right.
    The horizontal axis indicates the Transformer block
    index, and the vertical axis indicates the target
    computation-reduction factor.
    Color represents the normalized number of merged tokens,
    $M_l=r_l/T_0$.
    All subplots share the same color scale, with darker
    colors indicating a larger fraction of the initial
    patch tokens merged at the corresponding block.}
    \label{fig:layerwise-token-merging}
\end{figure*}

As shown in Fig.~\ref{fig:layerwise-token-merging},
stronger target compression leads to greater token reduction
in the early layers for both ToMe and PiToMe, while their
layer-wise merging patterns differ.
ToMe maintains a relatively stable merge count in the
initial blocks before the remaining token count imposes
a tighter upper bound.
In contrast, PiToMe uses a shared retention ratio, resulting
in progressively decreasing merge counts as the remaining
token sequence becomes shorter.
Similar patterns are observed across the three datasets,
indicating that the NSR evaluation covers token-merging
configurations with different layer-wise compression
patterns rather than a single reduction pattern.

% ============================================================
\section{Additional Comparisons and Ablations}
\label{app:additional}

\subsection{Comparison with Post-Merge-Only Refinement}
\label{app:residual-ablation}

NSR is motivated by a potential mismatch between operator-induced
indistinguishability and task-required distinctions:
member-level variations invisible to an operator may still be
needed for downstream prediction.
This comparison examines whether additional processing of
operator-visible merged representations can reproduce the gains
of the complete NSR pathway.

To this end, we introduce a post-merge-only control based on
merged representations.
It takes the output \(y_G\) of each merge group as input,
generates a feature update through encoding, gating, and decoding,
and adds this update back to the corresponding merged representation.
This branch only processes the group representations already
produced by the operator.
It does not access pre-merge member representations or null-space
residuals, nor does it introduce spatial memory or a write-back path.
All models use the same backbone architecture, training protocol,
merging rules, and predefined compression configurations.

In NSR, the null space of the currently realized linear operator
defines the source of candidate complementary residuals, while
the encoder, member-wise gating, and application-specific
integration jointly provide a task-supervised mechanism for
exploiting these residuals.
Accordingly, this comparison evaluates the complete pathway
against local refinement of the merged representations.

As shown in Table~\ref{tab:residual-ablation}, the effect of
the post-merge-only refinement varies across compressors and operating
points: it provides small improvements in some settings but
reduces baseline performance in others.
In contrast, Full NSR outperforms both the corresponding baseline
and the post-merge-only control in every setting reported in
the table, with more pronounced gains in several aggressive
compression settings.
For example, at the \(3.4\times\) operating point of ToMe,
the post-merge-only refinement decreases mIoU from the baseline value
of \(0.1831\) to \(0.1778\), whereas Full NSR increases it
to \(0.4982\).

These results show that, in the evaluated settings, additional
local processing of the merged representations alone does
not reproduce the gains of the complete NSR pathway.
Spatial routing is specific to the token-merging instantiation
for dense prediction and is absent from the graph-aggregation
instantiation, where NSR also improves performance across
multiple backbones and tasks, providing complementary evidence
for the generality of the same design principle across
structurally different settings.
Section~\ref{app:feature-spatial-ablation} examines how the
residuals are integrated through feature updates and spatial
routing, separately and jointly.
Section~\ref{app:source-ablation} further compares explicitly
extracted residuals with raw member features while retaining
both integration paths.

\begin{table}[t]
    \centering
    \caption{Comparison with a local residual correction based only on merged representations. Best and second-best
    results at each operating point are shown in bold and underlined,
    respectively.}
    \label{tab:residual-ablation}
    \small
    \setlength{\tabcolsep}{5pt}
    \begin{tabular}{lcccc}
        \toprule
        Compressor & Compression & Base & Post-merge-only & Full NSR \\
        \midrule

        \multirow{4}{*}{ToMe}
        & $1.8\times$ & \underline{0.6281} & 0.6274 & \textbf{0.6423} \\
        & $2.5\times$ & \underline{0.4458} & 0.4395 & \textbf{0.5954} \\
        & $3.4\times$ & \underline{0.1831} & 0.1778 & \textbf{0.4982} \\
        & $3.8\times$ & \underline{0.1698} & 0.1683 & \textbf{0.4623} \\

        \midrule

        \multirow{4}{*}{PiToMe}
        & $1.8\times$ & 0.6148 & \underline{0.6181} & \textbf{0.6244} \\
        & $2.5\times$ & 0.5577 & \underline{0.5593} & \textbf{0.5919} \\
        & $3.4\times$ & \underline{0.4370} & 0.4198 & \textbf{0.5402} \\
        & $3.8\times$ & \underline{0.3732} & 0.3683 & \textbf{0.5066} \\

        \midrule

        \multirow{4}{*}{MPM}
        & $1.8\times$ & 0.6199 & \underline{0.6203} & \textbf{0.6264} \\
        & $2.4\times$ & \underline{0.5887} & 0.5837 & \textbf{0.5974} \\
        & $3.0\times$ & 0.5493 & \underline{0.5609} & \textbf{0.5763} \\
        & $3.3\times$ & 0.5134 & \underline{0.5225} & \textbf{0.5485} \\

        \bottomrule
    \end{tabular}
\end{table}

\subsection{Ablation of information-integration pathways within NSR}
\label{app:feature-spatial-ablation}

We examine Feature Integration and Spatial Routing as two
application-specific ways of using null-space residual information in
token merging.
Copy-based unmerging restores the original token layout by
copying each group representation to its member positions,
but does not recover the original feature differences
between those members.
Both NSR variants extract member null-space residuals and
process them through the residual encoder and member-dependent
gates.
Feature Integration uses the resulting codes to update merged
tokens, allowing member differences to influence subsequent
backbone computation.
Spatial Routing uses the member codes to generate feature
corrections at the corresponding original patch positions.
We evaluate the two paths separately in NSR-Feature and NSR-Spatial, respectively, and jointly in Full NSR. All three NSR variants use the same null-space residual construction. Comparisons among these variants therefore examine how the residuals are integrated, with the original baseline providing a reference without the NSR pathway.

As shown in Table~\ref{tab:feature-spatial-ablation}, The strong performance of NSR-Spatial in several configurations shows that spatial utilization of null-space residuals can provide substantial benefits. Full NSR achieves
the highest mIoU in ten of the twelve settings, indicating an
advantage from jointly using the two paths in most evaluated
configurations. For example, at the \(3.4\times\) and \(3.8\times\)
operating points of PiToMe, jointly enabling the two paths improves
mIoU from \(0.5008\) and \(0.4810\) with the spatial path alone to
\(0.5402\) and \(0.5066\), respectively.

At the \(2.4\times\) operating point of MPM, the feature-only and
spatial-only variants achieve mIoU values of \(0.5837\) and \(0.5778\),
respectively, neither exceeding the baseline value of \(0.5887\),
whereas Full NSR reaches \(0.5974\). These results support a
complementary role for the two paths in some configurations:
jointly using null-space residual information through local feature
correction and compensation at the corresponding spatial positions
can yield gains not achieved by either path alone.

\begin{table}[t]
    \centering
    \caption{Ablation of Feature Integration and Spatial Routing. Best and
    second-best results at each operating point are shown in bold and
    underlined, respectively.}
    \label{tab:feature-spatial-ablation}
    \small
    \setlength{\tabcolsep}{3.5pt}
    \begin{tabular}{lccccc}
        \toprule
        Compressor & Compression & Base & NSR-Feature & NSR-Spatial & Full NSR \\
        \midrule

        \multirow{4}{*}{ToMe}
        & $1.8\times$ & 0.6281 & \underline{0.6351} & 0.6308 & \textbf{0.6423} \\
        & $2.5\times$ & 0.4458 & 0.4347 & \textbf{0.5985} & \underline{0.5954} \\
        & $3.4\times$ & 0.1831 & 0.1693 & \underline{0.4823} & \textbf{0.4982} \\
        & $3.8\times$ & 0.1698 & 0.1632 & \underline{0.4576} & \textbf{0.4623} \\

        \midrule

        \multirow{4}{*}{PiToMe}
        & $1.8\times$ & 0.6148 & 0.6138 & \textbf{0.6258} & \underline{0.6244} \\
        & $2.5\times$ & 0.5577 & 0.5687 & \underline{0.5873} & \textbf{0.5919} \\
        & $3.4\times$ & 0.4370 & 0.4162 & \underline{0.5008} & \textbf{0.5402} \\
        & $3.8\times$ & 0.3732 & 0.3783 & \underline{0.4810} & \textbf{0.5066} \\

        \midrule

        \multirow{4}{*}{MPM}
        & $1.8\times$ & 0.6199 & 0.6187 & \underline{0.6211} & \textbf{0.6264} \\
        & $2.4\times$ & \underline{0.5887} & 0.5837 & 0.5778 & \textbf{0.5974} \\
        & $3.0\times$ & 0.5493 & 0.5468 & \underline{0.5647} & \textbf{0.5763} \\
        & $3.3\times$ & 0.5134 & 0.5325 & \underline{0.5367} & \textbf{0.5485} \\

        \bottomrule
    \end{tabular}
\end{table}

\subsection{Ablation of the Input Source within NSR}
\label{app:source-ablation}

We examine the input-source choice within the token-merging
instantiation of the complete NSR framework.
To this end, we construct a raw-member-source control, denoted
RawSource, by replacing $z_i = x_i - y_G$ with $x_i$ as the
source supplied to both the encoder and the source input of
the gating network.
The member and group representations that additionally
condition the gate remain unchanged.
RawSource retains the same normalization modules, encoder,
gating network, feature decoder, spatial routing, and final
spatial decoder as Full NSR.
The two variants also use the same valid-member masks and
singleton handling, parameterization, initialization,
training protocol, and predefined compression configurations.
RawSource retains within-group variations.
Full NSR versus RawSource evaluates this internal source choice,
whereas Full NSR versus Base evaluates the complete NSR method.

\begin{table}[t]
    \centering
    \caption{
        Input-source ablation within the token-merging
instantiation of NSR on Pascal VOC.
        RawSource directly encodes pre-merge member features
        while retaining the complete NSR architecture.
        Base denotes the corresponding original merging baseline.
        WP follows the operating-point labels used in the main
        experiments.
        Values are mIoU on a 0--1 scale; higher is better.
        The best result at each operating point is shown in bold.
    }
    \label{tab:source-ablation}
    \small
    \setlength{\tabcolsep}{7pt}
    \renewcommand{\arraystretch}{0.9}
    \begin{tabular}{@{}llrrr@{}}
        \toprule
        Compressor & WP & Base & RawSource & Full NSR \\
        \midrule
        ToMe
        & $1.8\times$ & 0.6281 & 0.6349 & \textbf{0.6423} \\
        & $2.5\times$ & 0.4458 & 0.5930 & \textbf{0.5954} \\
        & $3.4\times$ & 0.1831 & 0.4912 & \textbf{0.4982} \\
        & $3.8\times$ & 0.1698 & 0.4490 & \textbf{0.4623} \\
        \midrule
        PiToMe
        & $1.8\times$ & 0.6148 & 0.6201 & \textbf{0.6244} \\
        & $2.5\times$ & 0.5577 & 0.5899 & \textbf{0.5919} \\
        & $3.4\times$ & 0.4370 & 0.5034 & \textbf{0.5402} \\
        & $3.8\times$ & 0.3732 & 0.4972 & \textbf{0.5066} \\
        \midrule
        MPM
        & $1.8\times$ & 0.6199 & 0.6226 & \textbf{0.6264} \\
        & $2.4\times$ & 0.5887 & 0.5839 & \textbf{0.5974} \\
        & $3.0\times$ & 0.5493 & 0.5658 & \textbf{0.5763} \\
        & $3.3\times$ & 0.5134 & 0.5435 & \textbf{0.5485} \\
        \bottomrule
    \end{tabular}
\end{table}

As shown in Table~\ref{tab:source-ablation}, Full NSR achieves
higher mIoU than RawSource in all twelve evaluated
configurations, with improvements ranging from
$0.20$ to $3.68$ mIoU points.
For example, at the $3.4\times$ operating point of PiToMe,
Full NSR achieves $0.5402$ mIoU compared with $0.5034$
for RawSource.
RawSource also improves over the corresponding original
baseline in eleven configurations, showing that directly
retaining and processing member information is beneficial
in most evaluated settings.
The additional gains of Full NSR support the practical value
of explicit null-space residual extraction within the same
complete pathway architecture.

A possible explanation is that explicit residual extraction
reduces the need for the complementary encoder to represent
components already available through the current merged
representation.
This may allow the finite-dimensional codes to focus more
effectively on operator-invisible member variations, while
the member and group representations continue to provide
context for gating.

\subsection{Complementary Pathway and Input-Source Ablation for Graph Aggregation}
\label{app:graph-source-ablation}

We compare Base, Post-aggregation-only, RawSource, and Full NSR on
Roman-empire using GCN, GraphSAGE, and GIN.
Post-aggregation-only generates corrections solely from the current
aggregate, without accessing member representations or residuals,
while matching the trainable parameter count of Full NSR.
RawSource replaces \(Z_v\) in Eq.~\eqref{eq:graph_nsr} with the
unprojected messages \(M_v\), including the corresponding gate input,
while retaining the member set, contextual inputs, remaining branch
structure, and initialization.
All configurations share the same data splits, backbone settings,
optimization protocol, training budget, and validation-based model
selection.
Table~\ref{tab:graph-source-ablation} reports the results;
Base and Full NSR are the same as in
Table~\ref{tab:heterophily-main}.

\begin{table}[htbp]
    \centering
    \caption{
        Graph-aggregation ablations on Roman-empire.
        Values are test accuracy (\%), reported as the mean and sample
        standard deviation over the ten official splits.
        The best result in each column is shown in bold.
    }
    \label{tab:graph-source-ablation}
    \small
    \setlength{\tabcolsep}{6pt}
    \begin{tabular}{@{}lccc@{}}
        \toprule
        Configuration & GCN & GraphSAGE & GIN \\
        \midrule
        Base
        & \(72.05 \pm 0.64\)
        & \(81.59 \pm 0.55\)
        & \(74.00 \pm 0.79\) \\
        Post-aggregation-only
        & \(75.30 \pm 0.69\)
        & \(84.00 \pm 0.46\)
        & \(76.01 \pm 0.62\) \\
        RawSource
        & \(79.21 \pm 0.41\)
        & \(85.57 \pm 0.64\)
        & \(80.69 \pm 0.51\) \\
        Full NSR
        & \(\boldsymbol{82.90 \pm 0.71}\)
        & \(\boldsymbol{87.71 \pm 0.61}\)
        & \(\boldsymbol{82.25 \pm 0.65}\) \\
        \bottomrule
    \end{tabular}
\end{table}

All three backbones exhibit the same mean-performance ordering:
Base \(<\) Post-aggregation-only \(<\) RawSource \(<\) Full NSR.
Additional post-aggregation processing helps, but Full NSR exceeds
this control by \(7.60\), \(3.71\), and \(6.24\) percentage points
for GCN, GraphSAGE, and GIN, respectively.
RawSource's advantage over Post-aggregation-only supports the value
of retaining and processing member information.
Full NSR further improves over RawSource by \(3.69\), \(2.14\),
and \(1.56\) percentage points.
Since RawSource also retains member differences, this comparison
supports the practical value of explicitly extracting
operator-defined residuals within the same branch architecture.

Unlike the visual instantiation, the graph branch contains no
spatial memory or write-back path
(Figure~\ref{fig:overview}).
These results therefore complement the token-merging ablations:
both the complete pathway and its residual-source choice remain
beneficial without visual spatial integration.
Together, the findings support the NSR framework of combining
operator-defined complementary information with member-level
processing and application-specific integration.

\section{Heterophilic Dataset Statistics}
\label{app:additional-heterophily}

To complement the heterophilic graph experiments in the main text,
Table~\ref{tab:heterophily-statistics} presents the statistics of the
five main heterophilic graph benchmarks, including graph structural
properties, node attributes, and label-related metrics.
These statistics are taken from \citet{platonov2023critical}
and provide additional context for understanding the experimental
results across different datasets.

\begin{table}[!hb]
    \centering
    \caption{Statistics of the five heterophilic graph benchmarks, as reported by ~\cite{platonov2023critical}.}
    \label{tab:heterophily-statistics}
    \small
    \setlength{\tabcolsep}{4pt}
    \renewcommand{\arraystretch}{0.9}
    \resizebox{0.9\linewidth}{!}{%
        \begin{tabular}{lccccc}
            \toprule
            Statistic
            & Roman-empire
            & Amazon-ratings
            & Minesweeper
            & Tolokers
            & Questions \\
            \midrule
            Nodes
            & 22662 & 24492 & 10000 & 11758 & 48921 \\
            Edges
            & 32927 & 93050 & 39402 & 519000 & 153540 \\
            Avg. degree
            & 2.91 & 7.60 & 7.88 & 88.28 & 6.28 \\
            Global clustering
            & 0.29 & 0.32 & 0.43 & 0.23 & 0.02 \\
            Avg. local clustering
            & 0.39 & 0.58 & 0.44 & 0.53 & 0.03 \\
            Diameter
            & 6824 & 46 & 99 & 11 & 16 \\
            Node features
            & 300 & 300 & 7 & 10 & 301 \\
            Classes
            & 18 & 5 & 2 & 2 & 2 \\
            Edge homophily
            & 0.05 & 0.38 & 0.68 & 0.59 & 0.84 \\
            Adjusted homophily
            & -0.05 & 0.14 & 0.01 & 0.09 & 0.02 \\
            Label informativeness
            & 0.11 & 0.04 & 0.00 & 0.01 & 0.00 \\
            \bottomrule
        \end{tabular}%
    }
\end{table}

Notably, Roman-empire exhibits relatively high Label Informativeness
(LI), and NSR achieves more pronounced performance gains on this
dataset. This observation is consistent with the discussion in the
main text that task-relevant information may influence the performance
benefits of NSR, providing additional context for understanding the
experimental results across different datasets.

\end{document}